\newtheorem{thm}{Theorem}
\newtheorem{lemma}{Lemma}
\newtheorem{defn}{Definition}
\newcommand{\bE}{\mathbb{E}}
\newcommand{\bR}{\mathbb{R}}
\newcommand{\bN}{\mathbb{N}}
\newcommand{\sF}{{\mathcal F}}
\newcommand{\sN}{{\mathcal N}}
\newcommand{\argmin}{\operatornamewithlimits{arg\ min}}
\newcommand{\ind}[1]{\mathbbm{1}\set{#1}}
\newcommand{\set}[1]{\left\{#1\right\}}
\newcommand{\abs}[1]{\left\lvert #1 \right\rvert}
\newcommand{\tnhat}{\hat{\theta}_{n}}
\begin{document}

\title{A Finite-Horizon Approach to Active Level Set Estimation}

\author{Phillip~Kearns, 
    Bruno~Jedynak,~\IEEEmembership{Senior Member,~IEEE},
    and~John~Lipor,~\IEEEmembership{Member,~IEEE}%
    \thanks{P. Kearns is with the Department of Chemistry and Biochemistry, University of Oregon, Eugene, OR 97403 USA (email: phillipk@uoregon.edu).}
    \thanks{B. Jedynak is with the Department of Mathematics and Statistics, Portland State University, Portland, OR 97201 USA (email: bruno.jedynak@pdx.edu).}
    \thanks{J. Lipor is with the Department of Electrical and Computer Engineering, Portland State University, Portland, OR 97201 USA (email:lipor@pdx.edu).}
}

\maketitle

\begin{abstract}
    We consider the problem of active learning in the context of spatial sampling for level set estimation (LSE), where the goal is to localize all regions where a function of interest lies above/below a given threshold as quickly as possible. We present a finite-horizon search procedure to perform LSE in one dimension while optimally balancing both the final estimation error and the distance traveled for a fixed number of samples. A tuning parameter is used to trade off between the estimation accuracy and distance traveled. We show that the resulting optimization
    problem can be solved in closed form and that the resulting policy generalizes existing approaches to this problem. We then show how this approach can be used to perform level set estimation in higher dimensions under the popular Gaussian process model. Empirical results on synthetic data indicate that as the cost of travel increases, our method's ability to treat distance nonmyopically allows it to significantly improve on the state of the art. On real air quality data, our approach
    achieves roughly one fifth
    the estimation error at less than half the cost of competing algorithms.
\end{abstract}

\begin{IEEEkeywords}
    Adaptive sampling, autonomous systems, dynamic programming, Gaussian processes, level set estimation, mobile sensors.
\end{IEEEkeywords}

\section{Introduction}
\label{sec:introduction}

In recent years, there has been a growing interest in autonomously sampling environmental phenomena, owing in part to the increasing occurrence of extreme events such as wildfires in the United States. In particular, the problem of adaptively sampling the environment to determine all regions where a phenomenon of interest is above or below a critical threshold---a problem known as \emph{level set estimation} (LSE)---has received a great deal of attention within the signal processing and
machine learning communities \cite{gotovos2013active,bogunovic2016truncated,lejeune2020thresholding}. The resulting algorithms are often designed with the goal of deployment on an autonomous, mobile sampling vessel, such as an unmanned aerial vehicle (UAV). Since these vehicles are tasked with covering regions on the order of hundreds of square kilometers, a key component of adaptive sampling methods is the ability to account for the costs associated with both the number of measurements taken and the distance traveled throughout the
sampling procedure.


As a motivating problem, we consider the task of tracking wildfires, where our goal is to determine the spatial extent of particulate matter 2.5 (PM 2.5) caused by wildfire smoke (see Fig.~\ref{fig:fireMap}).
Algorithms designed to rapidly determine the boundary of such a region fall within the category of \emph{active learning} or \emph{adaptive sampling} \cite{settles2012active,castro2008active} and typically try to maximize a notion of information gain per sample. However, this approach fails to account for the distance traveled throughout the sampling procedure. Hence, standard approaches to active learning based in search space reduction \cite{nowak2008generalized,dasgupta2005analysis,willett2006faster}
or adaptive submodularity \cite{golovin2011adaptive}, which seek to minimize only the number of samples taken, will be accompanied by potentially dramatic drawbacks in terms of total sampling cost. While the approaches in \cite{guillory2009average,bogunovic2016truncated} account for arbitrary costs, these treat costs myopically, failing to account for the expected future cost after sampling a given location. Newer, bisection-style search methods such as quantile search (QS)
\cite{lipor2017distance} and its extension \cite{lipor2018quantile} both achieve an explicit tradeoff between the number of samples and distance traveled. Although these improve upon previous methods in terms of total sampling time, neither guarantees to find the optimal search procedure.

\begin{figure*}[t]
    \centering
    \begin{subfigure}[t]{0.49\linewidth}
        \centering
        \includegraphics[width=0.96\linewidth]{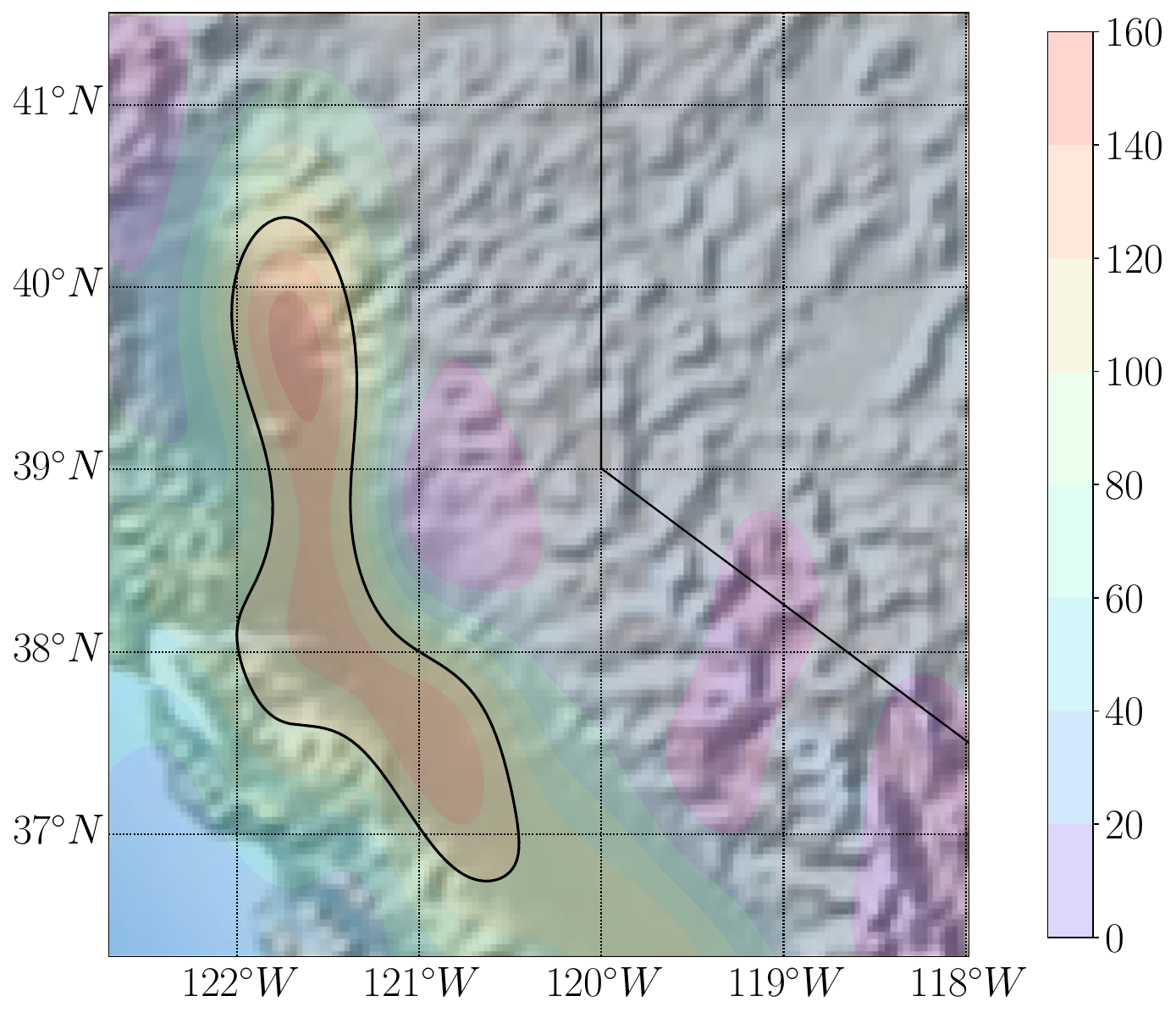}
        \caption{}
    \end{subfigure}
    \begin{subfigure}[t]{0.49\textwidth}
        \centering
        \includegraphics[width=\textwidth]{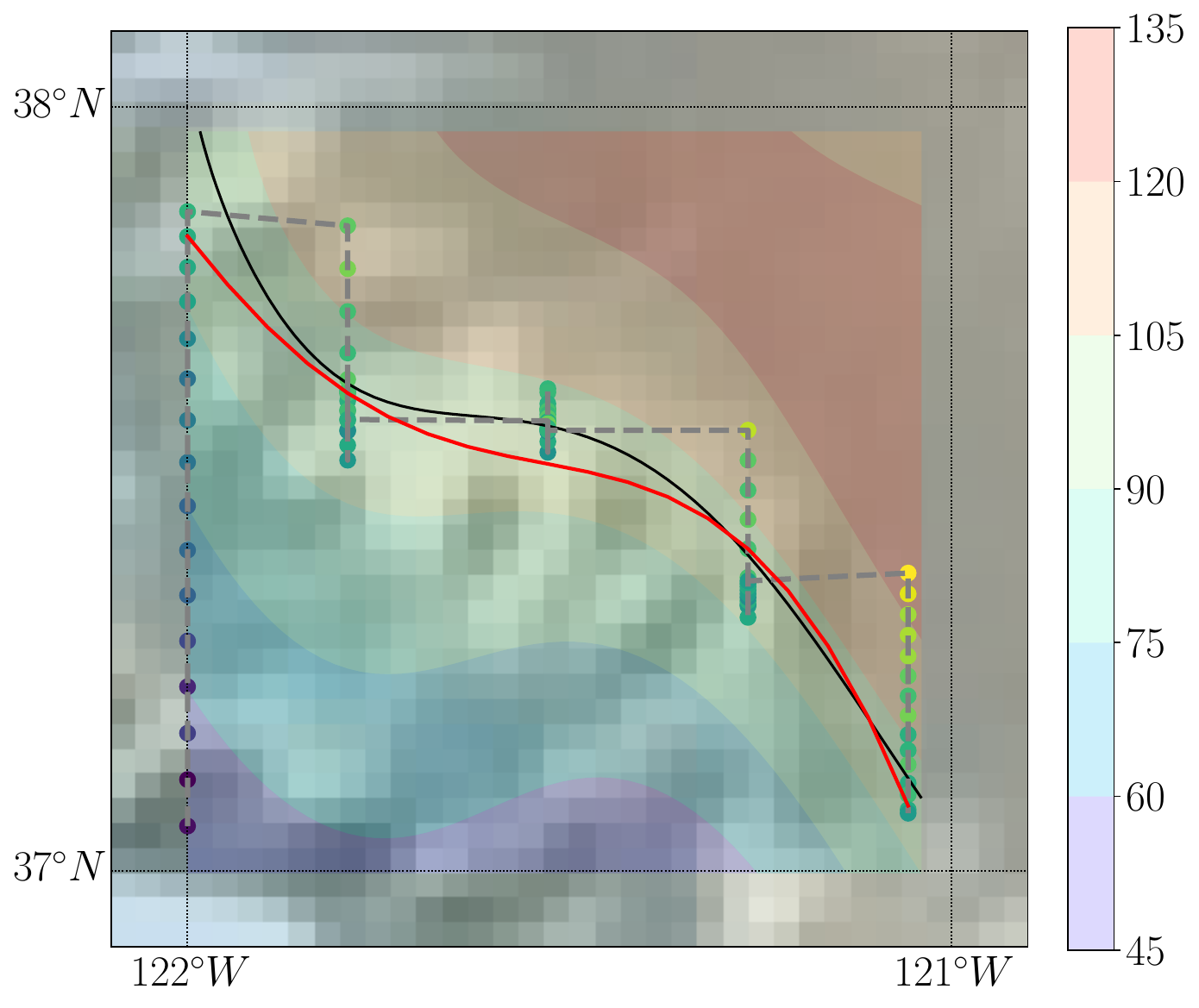}
        \caption{}
    \end{subfigure}
    \caption{Map of PM 2.5 following the California Camp Fire on November 18, 2018. (a) Full level set boundary (black) denoting all locations where PM 2.5 is above 100 $\mu$g / m$^{3}$. (b) Subset of region and samples collected via proposed method, which performs a series of one-dimensional searches. Dots denote sample locations, red solid line is estimated boundary, and gray dashed line is path traversed by sensor. Corresponding sub-region is approximately 111 km $\times$ 111 km.}
    \label{fig:fireMap}
\end{figure*}

In this paper, we present a finite-horizon sampling procedure that optimally balances the distance traveled with the final estimation error after obtaining $N$ measurements. At its extremes, this algorithm minimizes either the final entropy or total distance traveled, with a tradeoff achieved by varying a user-specified tuning parameter. 
This paper is an extension of the conference version \cite{kearns2019optimal}; our contributions beyond \cite{kearns2019optimal} are as follows. We prove that the policy obtained for the case of noiseless measurements is indeed a global minimum, as opposed to a critical point only. We then present a method for handling noisy measurements and prove this approach converges almost surely to the true change point of a one-dimensional step function. We extend this idea to show how the proposed search
algorithm can be used to perform LSE in Gaussian processes (GPs), a topic that has received a great deal of attention in the machine learning literature \cite{gotovos2013active,hitz2014fully,bogunovic2016truncated}. We provide extensive simulations on both synthetic data, as well as air quality data obtained from the AirNow database \cite{epa2020air}. Finally, we compare our approach with the state-of-the-art in Gaussian process level set estimation (GP-LSE) \cite{bogunovic2016truncated} and
demonstrate that the proposed method is capable of
estimating the level set at a lower sampling cost while requiring a fraction of the computation time.

\section{Problem Formulation \& Related Work}
\label{sec:problem}

As stated in the introduction, we are ultimately concerned with the problem of level set estimation in spatial domains, i.e., of estimating the superlevel set
\begin{equation}
    \label{eq:superlevelSet}
    S = \set{x \in \bR^{d}: f(x) \geq \gamma},
\end{equation}
where $f: [0,1]^{d} \to \bR$ is a function governing some phenomenon of interest and $\gamma$ is a user-defined threshold.
In this work, we are primarily concerned with the case where $f$ is a one-dimensional step function belonging to the class
\begin{equation*}
    \sF = \{f_{\theta} : f_{\theta}(x) = \ind{[0,\theta)}, \theta\in [0,1]\},
\end{equation*}
where $\ind{E}$ denotes the indicator function, which takes the value one when $x \in E$ and zero otherwise. In this case, the superlevel set is $S = \set{x \in [0,1]: x < \theta}$, and LSE is equivalent to estimating the change point $\theta$.
Although this model may seem highly restrictive, we will show that higher-dimensional level set boundaries can be estimated using a series of one-dimensional step functions, and that such an approach outperforms state-of-the-art algorithms that consider the two-dimensional problem directly. An example of using a series of one-dimensional searches to estimate a two-dimensional boundary is illustrated in Fig.~\ref{fig:fireMap}(b).

To perform boundary estimation, our sampling proceeds as follows. Assume we obtain observations 
$\{Y_{n}\}_{n = 1}^{N} \in \set{0,1}^{N}$ from the sample locations $\{X_{n}\}_{n = 1}^{N}$ in the unit interval in a sequential fashion according to $Y_{n} = \ind{x \in S}$, where $S$ is the superlevel set defined in \eqref{eq:superlevelSet}. In the case of one-dimensional step functions, each sample obtained reduces the interval in which the change point may lie. Our goal is then to estimate the change point location while minimizing the sampling cost for a fixed number of samples, a function of both the final expected interval size \textit{and} expected distance traveled. 

\subsection{Related Work}
\label{sec:related}

A variety of adaptive sampling methods have been proposed throughout the literature from various communities.
Several popular approaches to LSE assume that the underlying function of interest is a GP \cite{williams2006gaussian}, which yields a posterior distribution on the value at each point. In \cite{gotovos2013active}, the authors leverage the GP model to construct confidence intervals around the value of each point, sequentially sampling points of highest ambiguity. This approach was extended in \cite{hitz2014fully} and \cite{bogunovic2016truncated}, providing novel approaches for
sampling while also accounting for the distance traveled between points. Another method for path-efficient LSE that seeks to reduce the distance traveled by the mobile sensor is proposed in \cite{bottarelli2017path}, but this method assumes the vehicle can continuously acquire measurements with a negligible cost.

The authors of \cite{singh2006active} introduce the idea of adaptive data collection for mobile path planning, or \textit{informative path planning}, where previous samples are used to guide the motion of the sensing vehicles for further sampling. Algorithms for informative path planning typically focus on maximizing information gain over a scalar field for an underwater autonomous vehicle. The approaches presented in
\cite{singh2006active,krause2010auv,zhang2007adaptive} accommodate a wide range of sampling scenarios that include varied sampling time, path constraints, and limited battery. However, these methods often require a coarse sampling of the entire region of interest, which is not feasible for the large spatial regions considered here. In contrast, boundary detection methods like those in
\cite{marthaler2004tracking,cannell2005comparison,jin2007environmental} use mobile sensors to map a spatial threshold as closely as possible. These methods provide efficient and accurate mappings of a binary classification boundary but unfortunately do not account for the cost of obtaining each sample. 

Among the approaches from active learning, many rely on the principle of search space reduction \cite{dasgupta2005analysis,willett2006faster,nowak2008generalized}, which aims to rapidly reduce the set of points where the level set boundary may lie through intelligent sampling. In general, these methods do not permit the inclusion of distance-based or other penalties, and as a result they tend to yield bisection-type solutions \cite{burnashev1974interval} that require few samples but may
travel large distances. 
These methods can also be viewed as ``greedy'' approaches that aim to maximize search space reduction at each step. While greedy methods have been shown to be near optimal in terms of sample complexity \cite{nowak2008generalized,dasgupta2005analysis}, they frequently ignore additional costs that may be incurred during the sampling procedure. Moreover, methods such as \cite{donmez2008proactive} that greedily incorporate realistic costs into the algorithm formulation have been shown to perform worse than the alternative approaches when applied to distance-penalized searches \cite{lipor2017distance}. 

A popular greedy approach to active learning relies on the concept of adaptive submodularity (AS) \cite{golovin2010adaptive}. AS is a diminishing returns principle, which informally states that samples are more valuable early in the search procedure. The work of \cite{golovin2011adaptive} shows that a greedy procedure is optimal up to a constant factor for several active learning problems, including the case of nonuniform label costs. However, AS is a property of set functions, and does not consider a sequential dependency among sampling locations.  
A notion of submodular optimization with sequential dependencies was presented in \cite{tschiatschek2017selecting}, but the proposed algorithm relies on a reordering procedure that is not applicable to our problem.
While \cite{guillory2009average} provides a theoretical analysis of greedy active learning with non-uniform costs, the authors only consider the case of query costs being fixed. In contrast, our scenario has non-uniform and dynamic costs, where travel time depends on the distance between points.

Of primary relevance to the work presented in this paper is the work of \cite{lipor2017distance}, which introduces the quantile search (QS) algorithm for determining the change point of a one-dimensional step function while balancing the above costs. QS is a generalization of binary bisection \cite{castro2008minimax,horstein1963sequential,burnashev1974interval}, where the idea is that by successively sampling a fixed fraction $1/m$, where $m > 2$, into the remaining feasible interval, the
desired tradeoff between number of samples and distance traveled can be achieved. The authors characterize the expected error after a fixed number of samples as well as the distance traveled; they further provide an approach to handling noisy measurements and prove its convergence. This work was extended in \cite{lipor2018quantile}, introducing the uniform-to-binary (UTB) algorithm, where the key observation is that QS can be improved by allowing the search parameter $m$ to
vary with time. While both QS and UTB provide promising empirical results, neither algorithm provides guarantees of optimality in terms of the total sampling cost. Most recently, an optimal approach based on dynamic programming was presented in \cite{wang2019distance}, where the search procedure is cast as a stochastic shortest path problem \cite[Ch. 2]{bertsekas2007dynamic}. However, the use of dynamic programming requires an increased computation time, and the resulting
solution depends heavily on the discretization used. In this work, we present an approach that strictly generalizes the QS algorithm while still admitting a closed-form solution that can be easily deployed on a mobile sensing device. 

\section{Finite Horizon Search}
\label{sec:algorithm}

In this section, we describe our approach to distance-penalized LSE, which we refer to as \emph{finite horizon search} (FHS). To appropriately penalize distance, FHS considers a fixed number of measurements (i.e., a finite sampling horizon) and optimizes the weighted sum of distance traveled and entropy in the posterior distribution of the change point $\theta$ after obtaining these measurements. A tuning parameter allows the user to control the importance of distance penalization,
resulting in binary search at one extreme and sampling adjacent locations at the other.
In the case of noiseless measurements, we show that the optimal sampling policy can be obtained in closed form. We then show how this policy can be extended to handle noisy observations and prove the resulting method converges almost surely to the true change point. Finally, we describe an approach to the well-studied GP-LSE problem in the case
where the level set
boundary can be written as a function in one dimension.


\subsection{Noiseless Measurements}
\label{sec:noiseless}

We first consider the simple case of noiseless, binary-valued measurements, where our goal is to estimate the change point $\theta$ on the unit interval.
It is convenient, while not restrictive, to define search strategies in terms of the \emph{fraction of the remaining interval} to move at each step, whether forward or backward, in an analogous fashion to \cite{lipor2017distance,lipor2018quantile}. The resulting class of policies is adaptive to the unknown location of $\theta$ and non-restrictive in the sense that any optimal policy will not sample in locations with probability zero (locations outside the remaining interval).

Begin with a uniform prior on the change point $\theta$, and let the $N$ fractions be $\set{z_{n}}_{n=1}^{N}$, where $z_{n} \in [0,1]$ for $n = 1,\dots,N$. A straightforward Bayesian update yields the posterior distribution after each sample. Let $H_N$ be the entropy of the posterior distribution after $N$ observations, $D_N$ be the total distance traveled, and $\lambda \geq 0$ be a tuning parameter that governs the tradeoff between these costs. We define the expected sampling cost after $N$
observations as the weighed sum of exponentiated entropy and distance
\begin{equation}
    J(z_{1},\dots,z_{N}) = \bE_{\theta}\left[ e^{H_{N}} + \lambda D_N \right].
    \label{eq:totCost}
\end{equation}
Note that for a uniform distribution on an interval of length $a$, $e^{H_{N}} = e^{\log(a)} = a$; thus, when beginning with a uniform prior on $\theta$, \eqref{eq:totCost} is equivalent to minimizing a weighted combination of the (expected) final interval length and distance traveled. In what follows, we will derive a closed-form solution to this problem, as well as a means of computing the number of samples required to obtain an expected interval length below a given threshold.

\subsubsection{Closed-Form Solution}
\label{sec:analysis}

We now demonstrate that the global optimum of \eqref{eq:totCost} can be found in closed form. We first define the \emph{feasible interval} as the interval in which the change point may lie. Formally,
\begin{defn}
    Assume $n$ measurements at locations $X_{1},\dots,X_{n}$ have been obtained and define
    \begin{equation*}
        X_{l} = \max \set{X_{i} \in X_{1},\dots,X_{n}: Y_{i} = 1}
    \end{equation*}
    and
    \begin{equation*}
        X_{u} = \min \set{X_{i} \in X_{1},\dots,X_{n}: Y_{i} = 0}.
    \end{equation*}
    Then the \emph{feasible interval} after $n$ samples is $[X_{l}, X_{u}]$.
\end{defn}
To derive the optimal sampling fractions, we begin by rewriting \eqref{eq:totCost} in terms of the expected size of the feasible interval at each step, recognizing that the distance traveled at a given step is equal to the product of the interval size and the sampling fraction. The resulting cost function can be differentiated to find a critical point. The principle of dynamic programming verifies that the resulting solution is indeed a global optimum.

\begin{thm}
    \label{thm:closedForm}
    Let $\lambda \in [0,2)$ and assume the unknown change point has distribution $\theta \sim \text{Unif}([0,1])$. Further, assume the $N$ measurements are defined via $N$ fractions $z_{1},\dots,z_{N} \in [0,1]$ denoting the proportion of the current feasible interval to sample. Define $\xi_{i} = z_{i}^{2} + (1 - z_{i})^{2}$. Then the critical points of the cost function \eqref{eq:totCost} are of the form 
    \begin{equation}
        z^*_{k} = \frac{1}{2} - \lambda \frac{1}{4\rho_k}, \quad k = 1,\dots,N,
        \label{eq:optFractions}
    \end{equation}
    where $\rho_{N} = 1$ and
    \begin{align}
        \label{eq:optRho}
        \rho_k = \prod_{i=k+1}^{N}\xi_i + \lambda \sum_{i=k+1}^{N} z_i^{*} \prod_{j=k+1}^{i-1}\xi_j, \quad k = 1,\dots,N-1,
    \end{align}
    depends only on the fractions $z_{k+1},\dots,z_{N}$. 
\end{thm}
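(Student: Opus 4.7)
The strategy is to reduce the stochastic objective \eqref{eq:totCost} to a deterministic polynomial in $z_1,\ldots,z_N$, extract the first-order conditions, and recognize them as the claimed closed form. The first step is the standard observation that, under a uniform prior on $\theta$ and noiseless binary responses, the posterior for $\theta$ remains uniform on the current feasible interval $[X_\ell, X_u]$ after every update; this follows by a one-step Bayesian induction since $Y_n = \ind{X_n < \theta}$ slices $[X_\ell, X_u]$ cleanly. Therefore, if $I_n$ denotes the random length of the feasible interval after $n$ samples with $I_0 = 1$, then conditional on $I_{n-1}$, the next length $I_n$ equals $(1 - z_n) I_{n-1}$ with probability $1 - z_n$ and $z_n I_{n-1}$ with probability $z_n$, so $\bE[I_n \given I_{n-1}] = \xi_n I_{n-1}$, and by iteration $\bE[I_N] = \prod_{i=1}^N \xi_i$. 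Because the differential entropy of the uniform distribution on an interval of length $L$ is $\log L$, we have $e^{H_N} = I_N$ almost surely, and the travel contribution on step $n$ is $z_n I_{n-1}$, so \eqref{eq:totCost} reduces to the deterministic objective
\begin{equation*}
    J(z_1,\ldots,z_N) = \prod_{i=1}^{N}\xi_i + \lambda \sum_{n=1}^{N} z_n \prod_{i=1}^{n-1}\xi_i.
\end{equation*}

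Next, I would take the partial derivative with respect to a single $z_k$, using $d\xi_k/dz_k = 2(2 z_k - 1)$. Distance terms with $n < k$ are independent of $z_k$; the $n = k$ term contributes $\lambda \prod_{i=1}^{k-1}\xi_i$; and each term with $n > k$ contributes $\lambda z_n \cdot 2(2z_k - 1)\prod_{i \in \{1,\ldots,n-1\}\cut\{k\}}\xi_i$. Factoring out the strictly positive prefactor $\prod_{i=1}^{k-1}\xi_i$, the stationarity equation $\partial J/\partial z_k = 0$ collapses to
\begin{equation*}
    2(2z_k - 1)\brac{\prod_{i=k+1}^{N}\xi_i + \lambda \sum_{n=k+1}^{N} z_n \prod_{i=k+1}^{n-1}\xi_i} + \lambda = 0.
\end{equation*}
Recognizing the bracket as $\rho_k$ gives $z_k^{*} = 1/2 - \lambda/(4\rho_k)$, while $\rho_N = 1$ emerges from the empty product and empty sum conventions. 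By inspection, $\rho_k$ depends only on $z_{k+1}^{*},\ldots,z_N^{*}$, so the critical point can be assembled by backward recursion starting at $k = N$.

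The main obstacle is pure bookkeeping: the careful separation of the three classes of terms ($n < k$, $n = k$, $n > k$) in the differentiation, and checking that the quantity that naturally appears in the first-order condition matches the recursive definition \eqref{eq:optRho} rather than some shifted variant. There is no substantive analytic subtlety once $J$ has been reduced to a multilinear-type expression in the $\xi_i$ and $z_i$. The hypothesis $\lambda \in [0, 2)$ plays no role in the derivation itself but ensures that the terminal fraction $z_N^{*} = 1/2 - \lambda/4$ lies in $[0, 1/2]$, so that the ``fraction of the remaining interval'' interpretation is well-defined. Global optimality (as opposed to critical-point status) is a separate matter and, as indicated in the excerpt, will be confirmed via the dynamic programming principle.
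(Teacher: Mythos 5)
Your proposal is correct and follows essentially the same route as the paper: reduce \eqref{eq:totCost} to the deterministic polynomial $\prod_{i=1}^{N}\xi_i + \lambda\sum_{n=1}^{N} z_n\prod_{i=1}^{n-1}\xi_i$ (the paper isolates the inductive computation of $\bE[e^{H_N}]$ as Lemma~\ref{lem:expLength}, which you prove inline), then differentiate in $z_k$, factor out $\prod_{i=1}^{k-1}\xi_i$, and recognize the bracketed quantity as $\rho_k$. Your gradient $2(2z_k-1)\rho_k+\lambda$ matches the paper's $(4z_l-2)\rho_l+\lambda$, and your closing remarks on $\rho_N=1$, the backward recursion, and deferring global optimality to the dynamic programming argument mirror the paper exactly.
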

\begin{proof}
    The proof begins by rewriting the cost function in terms of the expected length of the feasible interval. Let $H_{N}$ be the length of the feasible interval after $N$ measurements. Then by Lemma~\ref{lem:expLength} (see Appendix), we have that 
    \begin{equation*}
        \bE \left[ e^{H_{N}} \right] = \prod_{i = 1}^{N} \left( z_{i}^{2} + (1 - z_{i})^{2} \right) = \prod_{i=1}^{N} \xi_{i}.
    \end{equation*}
    Let $D_{N}$ be the distance traveled after $N$ samples. Note that this distance is exactly the product of the interval length and the fraction of the interval to be traveled at each step, i.e.,
    \begin{equation*}
        D_{N} = \sum_{i = 1}^{N} z_{i} e^{H_{i-1}}.
    \end{equation*}
    Therefore
    \begin{align*}
        \bE\left[ D_{N} \right] & = \bE\left[ \sum_{i = 1}^{N} z_{i} e^{H_{i-1}} \right] \\
        & = \sum_{i = 1}^{N} z_{i} \bE\left[ e^{H_{i-1}} \right].
    \end{align*}
    Applying Lemma~\ref{lem:expLength} then yields
    \begin{equation*}
        \bE \left[ D_{N} \right] = \sum_{i = 1}^{N} z_{i} \prod_{j = 0}^{i-1} \xi_j.
    \end{equation*}
    We can therefore rewrite the cost function \eqref{eq:totCost} as
    \begin{eqnarray}
        J\left( z_{1},\dots,z_{N} \right) &=& \bE \left[ e^{H_{N}} \right] + \lambda \bE \left[ D_{N} \right] \nonumber \\ 
        &=& \prod_{i=1}^{N}\xi_i + \lambda \sum_{i=1}^{N} z_i \prod_{j=0}^{i-1}\xi_j.
        \label{eq:totCost2}
    \end{eqnarray}
    After rewriting in the form \eqref{eq:totCost2}, we can easily compute the gradient to be
    \begin{equation}
        \label{eq:costGradient}
        \frac{\partial J}{\partial z_l} = \left(\prod_{i=1}^{l-1} \xi_i\right) \left( \left(4z_l - 2\right) \rho_l + \lambda \right),
    \end{equation}
    and setting the gradient to zero yields
    \begin{equation*}
        z_l = \frac{1}{2} - \lambda \frac{1}{4\rho_l}.
    \end{equation*}
\end{proof}

Thm.~\ref{thm:closedForm} characterizes the critical points of \eqref{eq:totCost}. Although setting \eqref{eq:costGradient} to zero yields a unique solution, this is not sufficient to guarantee global optimality (a global optimum could lie on the boundary of $[0,1]^{N}$). Further, even in its simplified form \eqref{eq:totCost2}, the cost function is a high-order polynomial whose convexity is difficult to analyze. 

\begin{thm}
    The critical point characterized by Thm.~\ref{thm:closedForm} is the global optimum of the cost function \eqref{eq:totCost}.
\end{thm}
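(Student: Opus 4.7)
The plan is to prove global optimality via backward induction, exploiting the fact that the cost \eqref{eq:totCost2} has a recursive (dynamic programming) structure that already manifests in the formula for $\rho_k$. Define the cost-to-go from stage $k$ as
\begin{equation*}
V_k(z_k,\dots,z_N) = \prod_{i=k}^N \xi_i + \lambda \sum_{i=k}^N z_i \prod_{j=k}^{i-1}\xi_j,
\end{equation*}
so that $J = V_1$, adopt the convention $V_{N+1} \equiv 1$, and note that factoring $\xi_k$ out of the tail yields the Bellman recursion $V_k = \xi_k\, V_{k+1} + \lambda z_k$. Crucially, $V_{k+1}$ is independent of $z_k$, so minimization over $[0,1]^{N-k+1}$ decouples into a tail minimization of $V_{k+1}$ followed by a one-dimensional minimization over $z_k$.

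The induction hypothesis at level $k+1$ would state that (a) $(z_{k+1}^*,\dots,z_N^*)$ is the unique global minimizer of $V_{k+1}$ on $[0,1]^{N-k}$ with optimal value $\rho_k$, and (b) $\rho_k \geq \lambda/2 > 0$. The base case $k+1 = N+1$ is immediate, since $V_{N+1} \equiv 1$ trivially gives (a) and $\rho_N = 1 > \lambda/2$ gives (b). For the inductive step, applying (a) reduces the minimization of $V_k$ to the scalar problem
\begin{equation*}
\min_{z_k \in [0,1]}\, \bigl[\,(z_k^2 + (1-z_k)^2)\,\rho_k + \lambda z_k\,\bigr],
\end{equation*}
which is a strictly convex quadratic in $z_k$ (leading coefficient $2\rho_k > 0$ by (b)), whose unique unconstrained minimizer is $z_k^* = \tfrac{1}{2} - \tfrac{\lambda}{4\rho_k}$, matching \eqref{eq:optFractions}.

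The one technical subtlety---and the only place the restriction $\lambda \in [0,2)$ is needed---is verifying that $z_k^*$ actually lies in $[0,1]$, so that the unconstrained and constrained minima coincide. Since $\lambda \geq 0$ forces $z_k^* \leq 1/2$, only the lower bound $z_k^* \geq 0$ requires attention, and it is precisely claim (b). To propagate (b) down to the next level, substitute $z_k^*$ into $\xi_k$ to get $\xi_k = \tfrac{1}{2} + \tfrac{\lambda^2}{8\rho_k^2}$, from which the Bellman recursion yields
\begin{equation*}
\rho_{k-1} = \xi_k\,\rho_k + \lambda z_k^* = \frac{\rho_k}{2} + \frac{\lambda}{2} - \frac{\lambda^2}{8\rho_k}.
\end{equation*}
The inequality $\rho_{k-1} \geq \lambda/2$ then reduces to $\rho_k^2 \geq \lambda^2/4$, which is exactly (b). This closes the induction, and applying the result at $k = 1$ establishes that $(z_1^*,\dots,z_N^*)$ is the global minimizer of $J$.
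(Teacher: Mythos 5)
Your proof is correct and reaches the same conclusion as the paper's, but by a noticeably different dynamic-programming route. The paper invokes an exchange-type DP lemma: it shows that for \emph{any} prefix $z_1,\dots,z_{p-1}$, replacing $z_p$ by $z_p^*$ (with the tail already starred) never increases $J$, which works because the prefix enters only through the positive multiplicative factor $\pi_{p-1}=\prod_{i<p}\xi_i$; chaining these one-coordinate improvements from $p=N$ down to $p=1$ yields global optimality. You instead write down the cost-to-go $V_k$ explicitly, observe the Bellman recursion $V_k=\xi_k V_{k+1}+\lambda z_k$, and use positivity of $\xi_k$ to nest the minimizations directly, so you never need to consider arbitrary prefixes at all. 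Both arguments exploit the same multiplicative structure, but yours supplies two things the paper leaves implicit: (i) an explicit check that the unconstrained scalar minimizer $z_k^*=\tfrac12-\lambda/(4\rho_k)$ actually lies in $[0,1]$ (via the invariant $\rho_k\ge\lambda/2$), without which ``find the minimizer by differentiation'' is not justified on the constrained domain $[0,1]^N$; and (ii) the positivity $\rho_k>0$ needed for strict convexity of each scalar subproblem, which the paper asserts (``a second-order polynomial in $z$ and therefore convex'') without verifying the sign of the leading coefficient. One small wording fix: your invariant (b) should read ``$\rho_k\ge\lambda/2$ and $\rho_k>0$'' rather than ``$\rho_k\ge\lambda/2>0$'', since the latter fails literally when $\lambda=0$; in that case $\rho_k=2^{-(N-k)}>0$, $z_k^*=1/2$, and the induction still closes.
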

\begin{proof}
    The argument is based on the dynamic programming lemma \cite{bertsekas2005dynamic}, restated here for convenience. 
    \begin{lemma}
        \label{lem:dp}
        Suppose a sequence $z_{1}^{*},\dots,z_{n}^{*}$ is such that for any $z_{1},\dots,z_{n}$
        \begin{equation}
            \label{eq:dp1}
            J(z_{1},\dots,z_{n}) \geq J(z_{1},..,z_{n-1},z_{n}^{*}) 
        \end{equation}
        and for $1 \leq p \leq n-1$ and any $z_{1},\dots,z_{p}$ that
        \begin{equation}
            \label{eq:dp2}
            J(z_{1},\dots,z_{p},z_{p+1}^{*},\dots,z_{n}^{*}) \geq J(z_{1},\dots,z_{p}^{*},z_{p+1}^{*},\dots,z_{n}^{*}).
        \end{equation}
        Then for all sequences $z_{1},\dots,z_{n}$
        \begin{equation}
            J(z_{1},\dots,z_{n}) \geq J(z_{1}^{*},\dots,z_{n}^{*}).
        \end{equation}
    \end{lemma}


    We verify that the local minimum defined in Thm.~\ref{thm:closedForm} satisfies the hypothesis of Lemma~\ref{lem:dp}. 
    For any $z_{1},\dots,z_{p}$, define $\pi_{p} = \prod_{i=1}^{p-1} \xi_{i}$. To verify the statement \eqref{eq:dp1}, for a fixed $z_{1},\dots,z_{n-1}$, we let
    \begin{eqnarray*}
        f_{n}(z) &=& J(z_{1},\dots,z_{n-1},z) \\
        &=& \pi_{n-1} \xi + \lambda z \pi_{n-1} + \lambda \sum_{i=1}^{n-1} z_{i} \pi_{i-1},
    \end{eqnarray*}
    where $\xi = z^{2} + (1-z)^{2}$. The final term above does not depend on $z$, indicating that $f_{n}(z)$ is a second-order polynomial in $z$. Moreover, $\pi_{n-1} > 0$, so the above is strictly convex, and hence a unique global minimizer can be found by differentiation. It is easily verified that this corresponds to the critical point found in Thm.~\ref{thm:closedForm}. We next verify statement \eqref{eq:dp2}. Let
    \begin{align*}
        \begin{split}
        f_{p}(z) &= J(z_{1},\dots,z_{p-1},z,z_{p+1}^{*},\dots,z_{n}^{*}) \\
        &= \pi_{p-1} \left( \xi \prod_{i=p+1}^{n} \xi_{i}^{*} + \right. \\
        &\qquad \left. \lambda\left( z + z_{p+1}^{*} \xi + \dots + z_{n}^{*} \xi \prod_{i=p+1}^{n-1} \xi_{i}^{*} \right) \right) \\
        &= \pi_{p-1} \left( \lambda z + \xi \rho_{p}^{*} \right) + \lambda \sum_{i=1}^{p-1} z_{i} \pi_{i-1}.
        \end{split}
    \end{align*}
    The above is again a second-order polynomial in $z$ and therefore convex. Minimization through differentiation again yields a global minimizer that corresponds with the critical point defined in Thm.~\ref{thm:closedForm}. Therefore, both statements of Lemma 1 are satisfied for the sequence $z_{1}^{*},\dots,z_{N}^{*}$ defined in Thm.~\ref{thm:closedForm}, indicating that the critical points are indeed global minimizers of the cost function \eqref{eq:totCost}.
\end{proof}

The above results show that the optimal $N$-step FHS policy can be obtained in closed form. Further, the sampling fractions can be computed in linear time, beginning with $z_{N} = 1/2 - \lambda/4$ and proceeding backwards.
While the above considers the case of the unit interval, it is straightforward to show that the cost \eqref{eq:totCost} is linear in the length of the interval, and hence the search fractions are independent of the initial length.
As a first observation, we note that when $N = 1$, FHS is a greedy approach that minimizes the one-step lookahead for the value function without concern for future consequences. In this case, the policy samples a constant fraction into the feasible interval, independent of the size of this interval. This is exactly the QS approach described in \cite{lipor2017distance}, and thus QS may be considered an instance of our proposed method with $N = 1$. 

Examining Thm.~\ref{thm:closedForm}, we see that $\rho_k$ is monotonically increasing in $k$, and hence the sampling fractions are monotonically increasing with $k$, as can be seen in Fig.~\ref{fig:OptFracs}. The resulting behavior is to perform small movements early on in the sampling procedure, when the feasible interval is large, avoiding large movements at the sacrifice of information gain. As the feasible interval is reduced, the steps get proportionally larger and emphasize
information gain/entropy reduction, since the incurred distance penalty is smaller. This extends the intuition behind the UTB sampling procedure of \cite{lipor2018quantile} in a more principled manner.

\begin{figure}[t]
    \centering
    \includegraphics[width=3.5in]{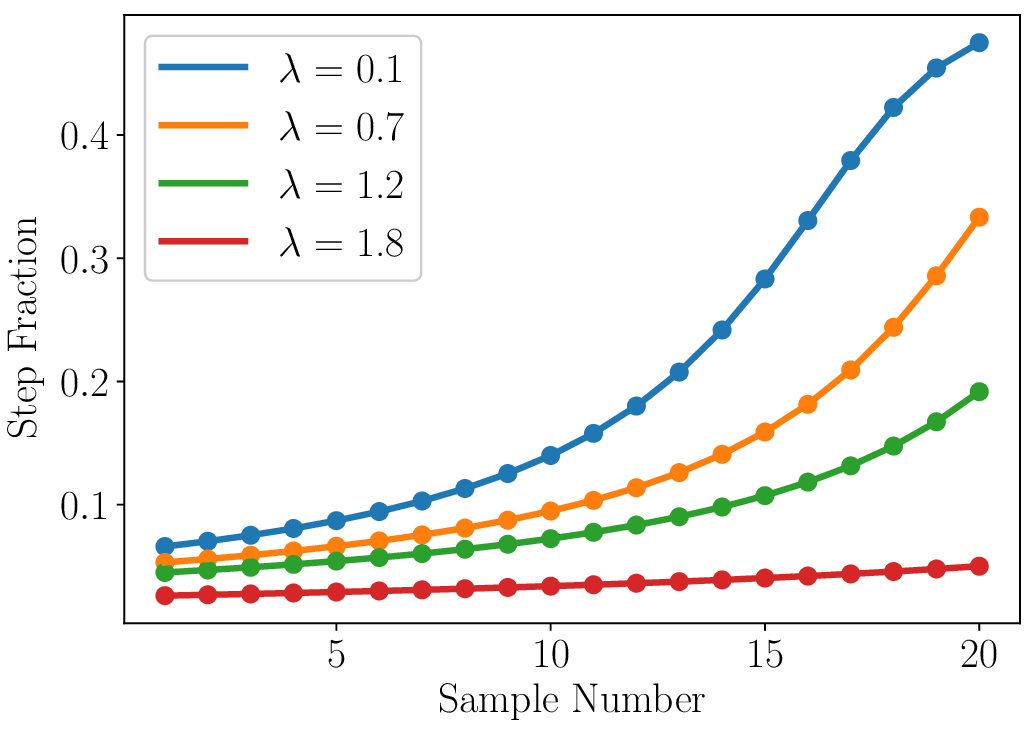}
        \caption{Sampling behavior of proposed FHS policy, showing fractions of the interval to travel at each step of a 20-step policy, where a larger $\lambda$ penalizes distance traveled more heavily. The FHS policy increases the sampling fraction over time to avoid traveling large distances.}
    \label{fig:OptFracs}
\end{figure}

The closed-form policy also provides insight into the range of admissible values of $\lambda$. Taking $\lambda = 0$ results in all fractions taking the value of 1/2, which is consistent with the well-known fact that binary bisection performs entropy minimization \cite{waeber2013bisection}. A higher value for the distance penalty parameter $\lambda$ results in a less aggressive policy, as the higher cost for potential overshoot encourages smaller steps. When $\lambda \geq 2$, the cost of traveling to obtain a
measurement, $\lambda z_1$, is larger than the expected reduction in entropy, $1-\xi_1$, and the trivial sample which requires no displacement is preferred. This is seen most directly by the fact that the final step is $z_{N} = 0$ in this case, making $\rho_{k} = 1$ for all $k$, and therefore all sampling fractions identically zero.

Finally, we note that \eqref{eq:totCost} may be solved directly using dynamic programming by discretizing the interval and allowing states to correspond to the possible lengths of the feasible interval. The corresponding cost is then the $\lambda$-penalized distance traveled at each step, with a terminal cost corresponding to the length of the final interval. In this light, the closed-form policy above may be viewed as an instance of dynamic programming, where each subproblem is 
solved in closed form.


\subsubsection{Searching with a Fixed Estimation Error}
\label{sec:searchProcedure}

In certain instances, a user may wish to terminate the search procedure when the length of the feasible interval is below a given threshold to guarantee a fixed estimation error. By noting that the exponential entropy is equal to the feasible interval length, the result of Lemma~\ref{lem:expLength} can be used to determine the number of steps required to reduce the expected interval length below the threshold $\varepsilon > 0$. In particular, we leverage the fact that the tail
subproblem of length $N-k$ is equivalent to the solution to the $(N-k)$-step problem for any $k \in \set{1,\dots,N-1}$, and hence the expected interval size can be computed sequentially until it is below the threshold $\varepsilon$.
Further, it is easy to show that intervals of arbitrary length $L$ are reduced by the same fraction, and hence we are not restricted to intervals of unit length.
Pseudocode for determining the expected number of samples and search fractions for each sample is given in Alg.~\ref{alg:Ent}. 

\begin{algorithm}[t]
    \caption{Policy calculation for fixed estimation error}
    \label{alg:Ent}
    \begin{algorithmic}[1]
        \STATE \textbf{Input:} stopping error $\varepsilon > 0$, distance penalty $\lambda \in [0,2)$, initial interval length $L$
        \STATE \textbf{Initialize:} $z_N \gets \frac{1}{2} - \frac{\lambda}{4}$, $\xi_{N} = \frac{1}{2} + \frac{\lambda^{2}}{8}$, $k \gets 0$ \\
        \WHILE{$L\prod_{i=N-k}^{N} \xi_i > \varepsilon$}
        \STATE $k \gets k+1$
        \STATE compute $\rho_{N-k}$ according to \eqref{eq:optRho}
        \STATE $z_{N-k} \gets \frac{1}{2} - \lambda/(4\rho_{N-k}) $
        \STATE $\xi_{N-k} \gets z_{N-k}^{2} + (1-z_{N-k})^{2}$
    \ENDWHILE
    \STATE $N \gets k$
    \end{algorithmic}
\end{algorithm}

In the case where a search terminates only after a given estimation error has been obtained, we follow a two-phase procedure. Before the search begins, we use the method presented in Alg.~\ref{alg:Ent} to calculate the $N$ steps such that the expected final interval size is less than $\varepsilon$. Then, in the first search stage, samples are taken according to this $N$-step policy. If the feasible interval is smaller than the desired threshold before all $N$ samples have been taken, the
search terminates. Otherwise, the algorithm performs a greedy search (optimal 1-step policy, line 7) until the interval is sufficiently small. Pseudocode for this procedure is provided in Alg.~\ref{alg:FH}.

\begin{algorithm}[t]
    \caption{Finite Horizon Search}
    \label{alg:FH}
    \begin{algorithmic}[1]
        \STATE \textbf{Input:} search fractions $z_{1}, \dots, z_{N}$, stopping error $\varepsilon$
        \STATE \textbf{Initialize:} $X_{0} \gets 0$, $Y_{0} \gets 1$, $a \gets 0$, $b \gets 1$, $n \gets 1$ 
        \WHILE{$b-a > \varepsilon$}
        \IF{$n \leq N$}
            \STATE{$z \gets z_n$}
        \ELSE 
            \STATE {$z \gets \frac{1}{2} - \frac{\lambda}{4} $}
        \ENDIF
        \IF{$Y_{n-1} = 1$}
            \STATE $X_{n} \gets X_{n-1} + z(b-a)$
        \ELSE
            \STATE $X_{n} \gets X_{n-1} - z(b-a)$
        \ENDIF
        \STATE $Y_{n} \gets f(X_{n})$
        \STATE $a = \max \set{X_{i}: Y_{i} = 1, i \leq n}$
        \STATE $b = \min \set{X_{i}: Y_{i} = 0, i \leq n}$
        \STATE $\tnhat \gets \frac{a + b}{2}$
        \STATE $n \gets n + 1$
        \ENDWHILE
    \end{algorithmic}
\end{algorithm}

\subsection{Noisy Measurements}
\label{sec:noisy}

In Section~\ref{sec:noiseless}, we assume the measurements are obtained in a noiseless manner, i.e., $Y_{i} = f_{\theta}(X_{i})$ exactly. However, low-cost environmental sensors are known to obtain measurements corrupted by noise. Further, in most real-world scenarios, the measurements themselves are real valued and then discretized to values of 0 (below level set threshold) or 1 (above threshold). In this case, values obtained near the true level set boundary are more likely to be erroneous, since small perturbations of the measurement can result in an
incorrect assignment. The work of \cite{burnashev1974interval,castro2008minimax,lipor2017distance} accounts for noisy binary-valued measurements by maintaining a posterior distribution over the change point $\theta$ and sampling at quantiles of this distribution. 
However, these assume both a constant search fraction and a constant probability of erroneous measurements (i.e., a bit flip with probability $p$).
In this section, we show how the policy derived from FHS can be extended to handle continuous-valued measurements corrupted by Gaussian noise and
prove that the resulting method converges almost surely to the true change point. 

\begin{algorithm}[ht]
    \caption{Probabilistic Finite Horizon Search}
    \label{alg:pfhs}
    \begin{algorithmic}[1]
        \STATE \textbf{Input:} search fractions $z_{1}, \dots, z_{N}$, noise variance $\sigma^{2}$, stopping error $\varepsilon$
        \STATE \textbf{Initialize:} $X_{0} = 0$, $\pi_{0}(x) = 1$ for all $x \in [0,1]$, $n \gets 1$ 
        \WHILE{$\bE_{\theta \sim \pi_{n}} \abs{\hat{\theta}_{n} - \theta} > \varepsilon$}
        \IF{$n \leq N$}
            \STATE{$z \gets z_n$}
        \ELSE 
            \STATE {$z \gets \frac{1}{2} - \frac{\lambda}{4} $}
        \ENDIF
        \STATE set $\tilde{X}_{0}, \tilde{X}_{1}$ such that
        \begin{equation*}
            \int_{0}^{\tilde{X}_{0}} \pi_{n-1}(x) = z \quad \text{and} \quad \int_{\tilde{X}_{1}}^{1} \pi_{n-1}(x) = 1 - z
        \end{equation*}
        \STATE $X_{n} \gets \argmin_{X \in \set{\tilde{X}_{0}, \tilde{X}_{1}}} \abs{X_{n-1} - X}$
        \STATE $Y_{n} \gets f(X_{n})$
        \IF{$Y_{n} > \gamma$}
        \STATE $p_{n} \gets 1 - \Phi\left( \frac{Y_{n} - \gamma}{\sigma} \right)$
        \STATE $\pi_{n}(x) = \begin{cases}
            \frac{p_{n}}{z \circ p_{n}} \pi_{n-1}(x), & x \leq X_{n} \\
            \frac{1 - p_{n}}{z \circ p_{n}} \pi_{n-1}(x), & x > X_{n}
        \end{cases}
        $
        \ELSE
        \STATE $p_{n} \gets \Phi\left( \frac{Y_{n} - \gamma}{\sigma} \right)$
        \STATE $\pi_{n}(x) = \begin{cases}
            \frac{1 - p_{n}}{z * p_{n}} \pi_{n-1}(x), & x \leq X_{n} \\
            \frac{p_{n}}{z * p_{n}} \pi_{n-1}(x), & x > X_{n}
        \end{cases}
        $
        \ENDIF
        \STATE $\hat{\theta}_{n} \gets \text{median}_{x} \pi_{n}(x)$
        \STATE $n \gets n + 1$
        \ENDWHILE
    \end{algorithmic}
\end{algorithm}

To handle noisy measurements, we utilize a probabilistic approach as in \cite{burnashev1974interval,lipor2017distance}, in which we sample a fraction into the posterior distribution on $\theta$ instead of the remaining interval. Beginning with a uniform prior over the change point, a posterior distribution $\pi_{n}(\theta)$ is obtained after each measurement via a Bayesian update. Consider sampling a fraction $z$ into the distribution $\pi_{n-1}$, resulting in the measurement location
$X_{n}$. In the case where $Y_{n} > \gamma$, the resulting update is
\begin{equation}
    \pi_{n}(x) = \begin{cases}
        \frac{p_{n}}{z \circ p_{n}} \pi_{n-1}(x), & x \leq X_{n} \\
        \frac{1 - p_{n}}{z \circ p_{n}} \pi_{n-1}(x), & x > X_{n},
    \end{cases}
    \label{eq:posUpdate}
\end{equation}
where $p_{n}$ is the probability of of an erroneous binary measurement and
\begin{equation*}
    z \circ p := zp + (1-z)(1-p).
\end{equation*}
In this case, a positive measurement indicates that the change point likely lies to the right of $X_{n}$, but there is still a nonzero probability that the change point is to the left, due to the possible erroneous measurement.
Similarly, for $Y_{n} < \gamma$, the Bayesian update becomes
\begin{equation}
    \pi_{n}(x) = \begin{cases}
        \frac{1 - p_{n}}{z * p_{n}} \pi_{n-1}(x), & x \leq X_{n} \\
        \frac{p_{n}}{z * p_{n}} \pi_{n-1}(x), & x > X_{n},
    \end{cases}
    \label{eq:negUpdate}
\end{equation}
where 
\begin{equation*}
    z * p := z(1-p) + (1-z)p.
\end{equation*}
After each update, the estimate $\hat{\theta}_{n}$ is taken to be the median of the resulting posterior distribution, and the expected absolute error is computed using the distribution $\pi_{n}$.

The above Bayesian updates assume binary-valued measurements and require the probability of an erroneous measurement. To handle more realistic sampling scenarios, we assume measurements are corrupted by zero-mean Gaussian noise, so that
\begin{equation}
    Y_{i} = f(X_{i}) + \zeta_{i} \sim \sN\left( f(X_{i}), \sigma^{2} \right),
    \label{eq:gaussianMeas}
\end{equation}
where $\sN(\mu, \sigma^{2})$ denotes the normal distribution with mean $\mu$ and variance $\sigma^{2}$. These measurements are then thresholded based on whether they are above or below the level set threshold $\gamma$.
Let $\Phi(\cdot)$ denote the cumulative distribution function of a standard normal random variable.
Under the measurement model \eqref{eq:gaussianMeas}, if we measure $Y_{i} < \gamma$ when $f(X_{i}) > \gamma$, an error occurs with probability $p_{i} = \Phi\left( \frac{Y_{i}-\gamma}{\sigma} \right)$. Similarly, if $Y_{i} > \gamma$ but $f(X_{i}) < \gamma$, an error occurs with probability $p_{i} = 1 - \Phi\left( \frac{Y_{i} - \gamma}{\sigma}
\right)$. 
Note that in both cases, the probability of error $p_{i}$ depends both on the noise variance $\sigma^{2}$ and the distance from the level set threshold. This is essential, as a measurement far from the level set threshold can handle much larger corruptions than one for which $\abs{Y_{i} - \gamma}$ is small.

Our search procedure computes this noise level after each measurement, updating the posterior to reflect high uncertainty when samples are obtained near the change point. Note that for a given search fraction $z_{n}$, equal information is gained by moving to the $z_{n}$ quantile of $\pi_{n}$ or the $1 - z_{n}$ quantile. To account for the goal of minimizing the distance traveled, we move to the nearer of these two quantiles at each measurement. Finally, to ensure the algorithm always moves
toward the median of the posterior, we follow the truncation approach of \cite{lipor2017distance}. We refer to this algorithm as \textit{probabilistic finite horizon search} (PFHS), and pseudocode is given in Alg.~\ref{alg:pfhs}. In the case where $\sigma = 0$, PFHS is exactly equivalent to FHS.

Below we show that, given sufficiently many measurements, a discretized version of the PFHS algorithm converges almost surely to the true change point. This approach discretizes the unit interval into bins of width $\Delta$ and facilitates analysis more easily than the continuous version \cite{castro2008active,waeber2013bisection}. We extend the analysis laid out in \cite{lipor2017distance} to allow for varying noise level and varying step sizes. 
\begin{thm}
    Assume measurements are obtained following the noise model \eqref{eq:gaussianMeas}. Then a discretized version of the PFHS algorithm converges almost surely to the true change point.
\end{thm}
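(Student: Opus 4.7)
The plan is to adapt the log-posterior-odds martingale argument of \cite{lipor2017distance} to the two new features of PFHS: the sampling fractions $z_n$ vary with $n$, and the per-step error probability $p_n$ is random, depending on the realized measurement $Y_n$. Because the first $N$ prescribed steps contribute only a bounded number of Bayes updates, almost-sure convergence is governed by the tail regime in which the policy uses the constant fraction $z_\infty = 1/2 - \lambda/4 \in (0,1/2)$; it therefore suffices to analyze the constant-fraction version. I would fix a discretization width $\Delta$, partition $[0,1]$ into bins $B_{1},\dots,B_{K}$ with $K = \lceil 1/\Delta \rceil$, and let $B_{j^{*}}$ denote the bin containing the true change point $\theta$. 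The goal is then to prove $\pi_{n}(B_{j^{*}}) \to 1$ almost surely, since the median estimator in line 18 of Alg.~\ref{alg:pfhs} then lies in $B_{j^{*}}$ for all sufficiently large $n$.

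For each incorrect bin $j \neq j^{*}$ I would track the log-odds $L_{n}^{(j)} = \log(\pi_{n}(B_{j^{*}})/\pi_{n}(B_{j}))$. Because the Bayesian updates \eqref{eq:posUpdate}--\eqref{eq:negUpdate} factor bin-wise and are constant on each side of $X_{n}$, the one-step increment equals $\pm \log((1-p_{n})/p_{n})$, with sign determined by which side of $X_{n}$ contains $B_{j^{*}}$ and which contains $B_{j}$, and by whether $Y_{n}$ exceeds $\gamma$. The key computation is the drift: conditional on $\sF_{n-1}$, integrate this increment against the true Gaussian law $Y_{n} \sim \sN(f(X_{n}),\sigma^{2})$ to establish
\begin{equation*}
    \bE\bigl[L_{n}^{(j)} - L_{n-1}^{(j)} \,\big|\, \sF_{n-1}\bigr] \geq \delta > 0,
\end{equation*}
where $\delta$ depends only on $\sigma$, $\gamma$, and $z_{\infty}$. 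Intuitively, this positivity reflects the fact that under the true model the thresholded observation $\mathbbm{1}\{Y_{n} > \gamma\}$ is biased toward the side of $X_{n}$ containing $\theta$, so the multiplicative update systematically favours the bin containing $\theta$; quantitatively, the expected log-Bayes factor reduces to a truncated Kullback--Leibler-style integral between $\sN(1,\sigma^{2})$ and $\sN(0,\sigma^{2})$, which is strictly positive for any finite $\sigma$.

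To pass from positive drift to almost-sure divergence, I would observe that $|L_{n}^{(j)} - L_{n-1}^{(j)}|$ is uniformly sub-Gaussian because $\log((1-p_{n})/p_{n}) = |Y_{n}-\gamma|/\sigma \cdot \Phi(\cdot)/(1-\Phi(\cdot))$-type tails inherit from the Gaussian tails of $Y_{n}$. An application of the strong law for square-integrable martingale differences to the recentred sequence $L_{n}^{(j)} - \sum_{k \leq n} \bE[L_{k}^{(j)} - L_{k-1}^{(j)} \mid \sF_{k-1}]$ then gives $L_{n}^{(j)}/n \to \delta_{j} > 0$ almost surely, hence $\pi_{n}(B_{j}) \to 0$ almost surely for each fixed $j \neq j^{*}$. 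Summing over the finitely many incorrect bins yields $\pi_{n}(B_{j^{*}}) \to 1$ almost surely and completes the proof.

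The hard part will be the drift inequality, specifically reconciling it with the selection of $X_{n}$ as the nearer of two quantiles and the further truncation toward the posterior median described after \eqref{eq:gaussianMeas}. This truncation couples $X_{n}$ to the full posterior history and breaks the clean conditional independence that simplifies the analysis in \cite{lipor2017distance}. My plan is to argue that the truncation only improves the drift: biasing $X_{n}$ toward the bulk of $\pi_{n-1}$ and toward its median increases the probability that $X_{n}$ lies in a region where the true $f(X_{n})$ is informative about $\theta$, which monotonically increases the conditional expected log-Bayes factor. Verifying this monotonicity rigorously, rather than appealing to it heuristically, is where most of the technical work will live; a convenient fallback is to discretize the posterior quantile level as well and cast the bin containing $\theta$ as the absorbing state of an ergodic Markov chain with uniformly positive drift toward $B_{j^{*}}$.
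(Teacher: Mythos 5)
Your route is genuinely different from the paper's: the paper does not re-derive a drift/martingale argument at all, but instead invokes the finite-sample concentration bound of \cite{lipor2017distance}, $\sup_{\theta}\Pr(|\tnhat-\theta|>\Delta)\leq \frac{1-\Delta}{\Delta}\,t(z)^{n}$, applied with the worst-case fraction $z=\min_{i}z_{i}>0$ (valid for $\lambda<2$) and the worst-case error probability $p<1/2$ (valid because each $p_{n}$ is a Gaussian tail away from the mean), shows $t(z)<1$ by writing \eqref{eq:tDef} as $a+bc$ with $a+b=1$ and $c<1$, and finishes with Borel--Cantelli. Your from-scratch argument has a genuine gap at its central step, the claimed uniform drift $\bE[L_{n}^{(j)}-L_{n-1}^{(j)}\mid\sF_{n-1}]\geq\delta>0$. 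The updates \eqref{eq:posUpdate}--\eqref{eq:negUpdate} apply the \emph{same} multiplicative factor to every point on a given side of $X_{n}$, so whenever $X_{n}$ does not lie between $B_{j}$ and $B_{j^{*}}$ the increment of $L_{n}^{(j)}$ is exactly zero. A history-independent per-step lower bound $\delta>0$ is therefore false as stated; you would need to show that each incorrect bin is separated from $B_{j^{*}}$ by the sampled quantile with conditional probability bounded away from zero infinitely often, and that is precisely the part of the dynamics governed by the quantile rule, the nearer-of-two-quantiles selection, and the median truncation --- the parts you defer. The standard repair (and the one implicitly behind the bound the paper cites, visible in the appearance of $\alpha=\sqrt{p}/(\sqrt{p}+\sqrt{1-p})$ in \eqref{eq:tDef}) is to track a single global Burnashev--Zigangirov-type potential such as $\bE\bigl[\sum_{j\neq j^{*}}\sqrt{\pi_{n}(B_{j})/\pi_{n}(B_{j^{*}})}\bigr]$, which contracts by a factor $t(z)<1$ at \emph{every} step regardless of where $X_{n}$ falls, rather than a family of pairwise log-odds.

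Two further points. First, the deferred claim that the median truncation and nearer-quantile tie-breaking ``only improve the drift'' is exactly the content that would need proof, and your fallback of casting $B_{j^{*}}$ as ``the absorbing state of an ergodic Markov chain'' is not coherent (an ergodic chain has no absorbing state, and the posterior is not finite-state in any useful sense). The paper sidesteps all of this by citing a result that already accounts for the truncated quantile rule and by reducing the varying $(z_{n},p_{n})$ to worst-case constants. Second, a minor slip: $\log((1-p_{n})/p_{n})$ grows like $(Y_{n}-\gamma)^{2}/(2\sigma^{2})$ in the Gaussian tail, so the increments are sub-exponential rather than sub-Gaussian; square-integrability still holds, so the martingale strong-law step would survive, but only once the drift issue above is repaired. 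What your approach would buy, if completed, is a self-contained proof that does not lean on the external bound; what it costs is essentially re-proving that bound, and as written the hardest part is left open.
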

\begin{proof}
    We wish to show that for any $\varepsilon > 0$ and any $\theta \in [0,1]$
    \begin{equation}
        \Pr\left( \limsup_{n \to \infty} \sup_{\theta \in [0,1]} \abs{\hat{\theta}_{n} - \theta} > \varepsilon \right) = 0.
        \label{eq:asCond}
    \end{equation}
    For any $\varepsilon > 0$, set $\Delta < \varepsilon$ such that $\Delta^{-1} \in \bN$ and consider a discretized probabilistic search algorithm with bin size $\Delta$. By \cite{lipor2017distance}, for any $\Delta > 0$, the discretized probabilistic search algorithm that samples a fraction $z$ into the posterior with a noise level $p < 1/2$ satisfies
    \begin{equation*}
        \begin{split}
            \sup_{\theta \in [0,1]} \Pr \left(|\tnhat - \theta| > \Delta \right) \leq \frac{1 - \Delta}{\Delta} t(z)^{n},
        \end{split}
    \end{equation*}
    where $\alpha = \sqrt{p}/(\sqrt{p} + \sqrt{1-p})$ and
    \begin{equation}
        t(z) := \frac{1-p}{2(1-\alpha)} \frac{p}{2\alpha} + \left( \frac{1-p}{2(1-\alpha)} - \frac{p}{2\alpha} \right)(1-2\alpha) \left( 1 - 2z \right).
        \label{eq:tDef}
    \end{equation}
    We first show that $t(z) < 1$ as long as $z > 0$ and $p < 1/2$.
    Algebraic manipulations indicate that for $p < 1/2$,
    \begin{equation*}
        \frac{1-p}{2(1-\alpha)} + \frac{p}{2\alpha} + \left( \frac{1-p}{2(1-\alpha)} - \frac{p}{2\alpha} \right)(1-2\alpha) = 1.
    \end{equation*}
    Since $z < 1/2$, we have that $1 - 2z < 1$, and therefore \eqref{eq:tDef} is of the form $a + bc$, where $a, b > 0$, $a + b = 1$, and $c < 1$. We wish to show that $a + bc < 1$. Using the fact that $b = 1-a$, an equivalent statement is
    \begin{equation*}
        a + (1-a) c < 1 \iff (1-a)c < 1-a,
    \end{equation*}
    which holds as long as $c < 1$.

    Next, observe that since $t(z) < 1$,
for any $\theta \in [0,1]$
    \begin{eqnarray*}
        \sum_{n=1}^{\infty} \Pr \left(|\tnhat - \theta| > \Delta \right) &\leq& \sum_{n=1}^{\infty} \frac{1-\Delta}{\Delta} t(z)^{n} \\
        &=& \frac{1-\Delta}{\Delta} \left( \frac{1}{1-t(z)} - 1 \right) < \infty.
    \end{eqnarray*}
    By the Borel-Cantelli lemma, this guarantees that \eqref{eq:asCond} holds. Finally, let $z = \min_{i} \set{z_{i}}_{i=1}^{N}$ and $p$ be the maximum noise level observed throughout the sampling procedure. For $\lambda < 2$, we have $z > 0$. Further, since $p$ is computed by taking the tail of a Gaussian distribution in the direction away from the mean, we have $p < 1/2$. This guarantees convergence of the discretized form of the PFHS algorithm.
\end{proof}

The analysis of probabilistic search algorithms has been a topic of significant study over more than fifty years \cite{horstein1963sequential,burnashev1974interval,pelc2002searching,karp2007noisy,or2008bayesian,waeber2013bisection,tsiligkaridis2016asynchronous}.
While the search fractions used in PFHS are derived from the noiseless setting and therefore suboptimal, deriving an optimal policy for the noisy case is intractable due to the combinatorial explosion of potential posterior distributions. Determining approximate solutions for the noisy case via reinforcement learning is an important topic that lies beyond the
scope of this work.

\subsection{Gaussian Process Level Set Estimation}
\label{sec:gpLSE}

Given a means of handling noisy measurements, we now consider the problem of LSE in the case where the underlying function $f$ is a GP. Formally, a GP is a collection of random variables, one for each value of $f(x)$, for which every finite subset forms a Gaussian random vector \cite{williams2006gaussian}. A GP is characterized by its mean function $\mu(x) = \bE[f(x)]$ and its covariance/kernel function $k(x,x') = \bE[(f(x) - \mu(x))(f(x') - \mu(x'))]$, which governs the smoothness of the
function over the domain, which in our case is $[0,1]^{d}$. In the GP-LSE problem, we assume measurements are corrupted by Gaussian noise, so that $Y_{i} = f(X_{i}) + \zeta_{i}$ with $\zeta_{i} \sim \sN(0, \sigma^{2})$. After obtaining measurements $Y_{1}, \dots, Y_{n}$ at corresponding locations $X_{1}, \dots, X_{n}$, the posterior mean and covariance can be obtained as
\begin{eqnarray*}
    \mu_{n}(x) &=& k_{n}(x)^{T}(K_{n} + \sigma^{2}I)^{-1} y_{n} \\
    k_{n}(x,x') &=& k(x,x') - k_{n}(x)^{T}(K_{n} + \sigma^{2}I)^{-1} k_{n}(x'),
\end{eqnarray*}
where $k_{n}(x) = \begin{bmatrix} k_{n}(x,X_{1}) & k_{n}(x,X_{2}) & \dots & k_{n}(x,X_{n})\end{bmatrix}^{T} \in \bR^{n}$, $K_{n} \in \bR^{n \times n}$ is the positive-definite kernel matrix whose $i,j$th entry is $k_{n}(X_{i}, X_{j})$, and $y_{n} \in \bR^{n}$ is the vector of measurements. The GP model is frequently used in environmental applications \cite{diggle1998model,wong2004comparison,ma2018data} (often referred to as \emph{kriging} in this context \cite{stein2012interpolation}), and the
ability to measure posterior variance has led to a number of approaches to active learning in GPs \cite{gotovos2013active,hitz2014fully,hoang2014nonmyopic,bogunovic2016truncated,inatsu2019active}. To apply our proposed FHS to the GP-LSE problem, we consider a subset of GPs wherein one coordinate of the level set boundary is a function of all other coordinates, as depicted in Fig.~\ref{fig:gpExample}. This assumption is similar to the \emph{boundary fragment} assumption, which has been widely
studied within the nonparametric active learning literature \cite{scott2006minimax,castro2008minimax,korostelev2012minimax}. This assumption reduces the superlevel set to a single, simply-connected region, which commonly holds in environmental
applications \cite{cruz2010adaptive,zhou2013spatial}. In this section, we restrict ourselves to the two-dimensional case for clarity, but the ideas presented can easily be extended to higher dimensions.

\begin{figure}[t]
    \centering
    \includegraphics[width=3.0in]{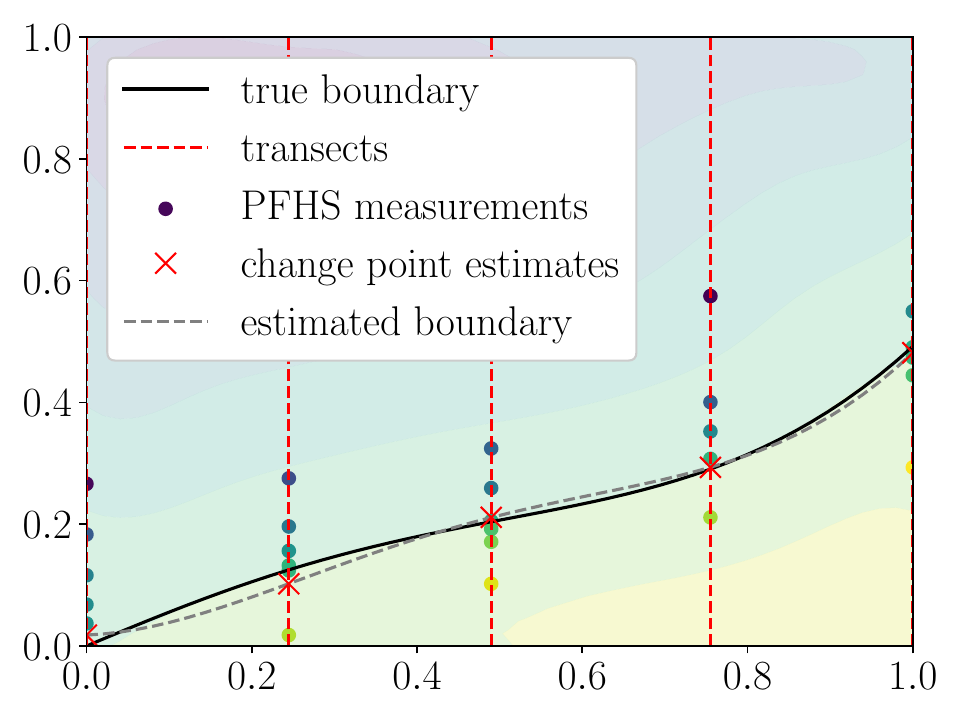}
    \caption{Example Gaussian process level set estimation when the boundary is a function of the first coordinate. The unit interval is split into five equally-spaced transects, and a PFHS procedure is used to localize the change point along each transect. Change point estimates are then used to estimate the boundary via GP regression.}
    \label{fig:gpExample}
\end{figure}

Under the above assumption, our approach to GP-LSE is as follows. Assume without loss of generality that the level set boundary $\partial S$ is a function of the first coordinate. We split the unit interval into a series of transects and perform a one-dimensional PFHS to localize $\partial S$ along each transect, initializing transects as described in Sec.~\ref{sec:seqInit}.
This process is depicted in Fig.~\ref{fig:gpExample}. As a result, our goal is to estimate the one-dimensional function $\partial S$ assuming it is a GP. Along each transect, we run PFHS and treat the estimated change point
as a noisy measurement of $\partial S$ at the transect location. Clearly the accuracy of estimating $\partial S$ is governed by the number of transects and the accuracy of localizing the change point along each transect. In this case, the number of transects
governs the approximation error and the stopping error along each transect defines the estimation error. In our experiments, we tune these parameters though a grid search, and a theoretical characterization of the optimal balance between these parameters is an important topic of future study.

\subsubsection{Initialization and Policy Calculation}
\label{sec:seqInit}

Rather than beginning the search from the origin at each transect, we make use of information from previous transects and initialize the search from the previous change point estimate. 
In the noiseless case, this initial sample reduces the interval size, and we then compute the
optimal policy for the resulting interval length using Alg.~\ref{alg:Ent}. In the noisy case, the initial sample alters the distribution $\pi_{n}$, and we wish to derive an equivalent notion of interval reduction in order to determine the appropriate policy for each transect. Recall that in the noiseless case, the length of the feasible interval corresponds to the exponentiated differential entropy. We therefore compute the \textit{effective interval size} based on the exponentiated
differential entropy after one sample has been obtained. Let $X_{0}$ denote the initial sample location and $p$ denote the corresponding derived probability of error. Following the update equations \eqref{eq:posUpdate} and \eqref{eq:negUpdate}, in the case where $Y_{0} > \gamma$, we have
\begin{align*}
H_{0} &= -\int_{0}^{X_{0}} \frac{p}{X_{0} \circ p} \log\left( \frac{p}{X_{0} \circ p} \right) d\theta - \\
& \qquad \int_{X_{0}}^{1} \frac{1-p}{X_{0} \circ p} \log\left( \frac{1-p}{X_{0} \circ p} \right) d\theta \\
    &= \log\left( X_{0} \circ p \right) - \\
    & \qquad \frac{1}{X_{0} \circ p} \left( p X_{0} \log(p) + (1-p) (1 - X_{0}) \log(1-p) \right).
\end{align*}
Exponentiating gives
\begin{equation*}
    e^{H_{0}} = \left( X_{0} \circ p \right)\left( p^{-pX_{0} / (X_{0} \circ p)} (1-p)^{-(1-p)(1-X_{0}) / (X_{0} \circ p)} \right).
\end{equation*}
Similarly, in the case where $Y_{0} < \gamma$, we have
\begin{equation*}
    e^{H_{0}} = \left( X_{0} * p \right)\left( p^{-p(1-X_{0}) / (X_{0} * p)} (1-p)^{-(1-p)X_{0} / (X_{0} * p)} \right).
\end{equation*}
This generalizes the notion of interval length to the case of noisy measurements, and when $p = 0$ is exactly equal to the resulting interval length.
For each transect, we obtain the initial measurement, compute the corresponding effective interval size (exponentiated entropy), and then compute the policy corresponding to this length via Alg.~\ref{alg:Ent}.

\section{Simulations \& Experiments}
\label{sec:simulations}

\subsection{Performance on One-Dimensional Step Functions}

In this section, we verify the performance of the proposed FHS and PFHS policies. A thorough empirical investigation of the noiseless setting was performed in the conference version of this work \cite{kearns2019optimal}.
Here we demonstrate the reduced cost (as defined by Eq.~\ref{eq:totCost}) using PFHS compared to FHS in the noisy setting. We then benchmark FHS and PFHS against the existing QS and UTB algorithms in a time-penalized search scenario.

\subsubsection{Comparison of FHS and PFHS}

\begin{figure*}[t]
    \centering
    \begin{subfigure}[t]{0.32\textwidth}
        \centering
        \includegraphics[width=\textwidth]{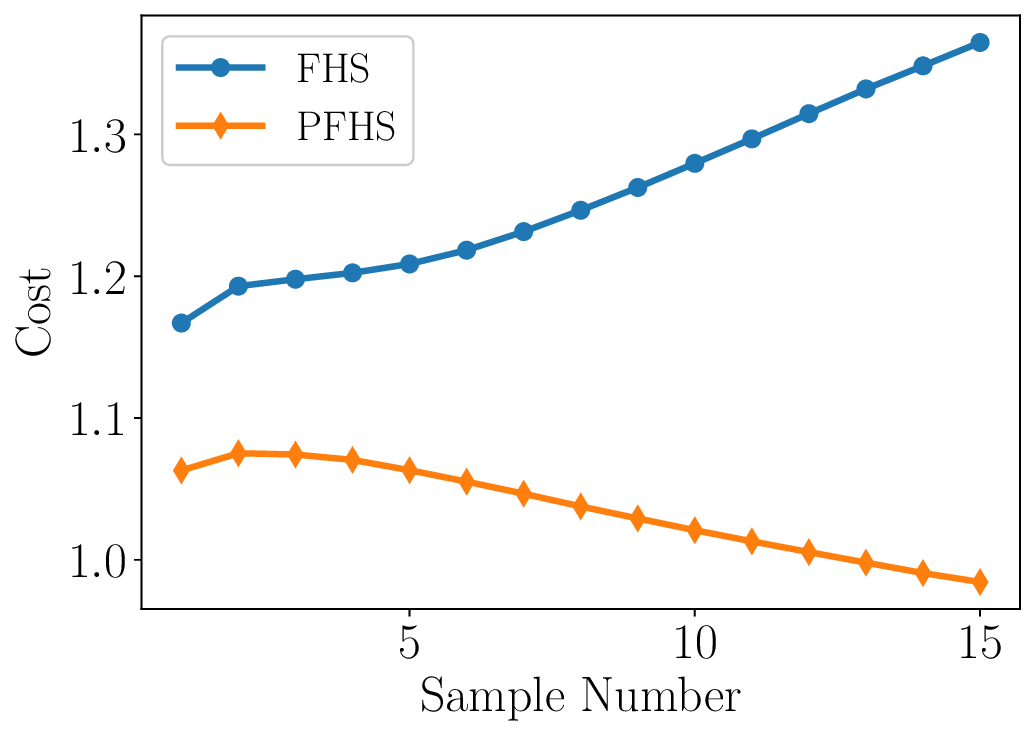}
        \caption{}
    \end{subfigure}
    \hfill
    \begin{subfigure}[t]{0.32\textwidth}
        \centering
        \includegraphics[width=\textwidth]{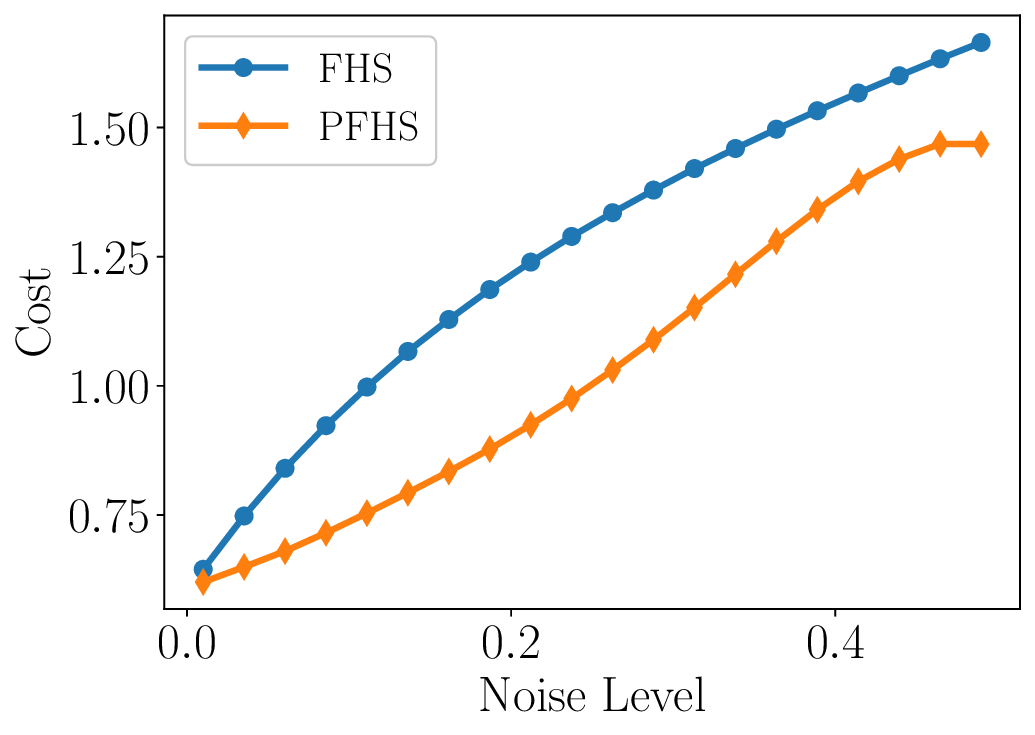}
        \caption{}
    \end{subfigure}
    \hfill
    \begin{subfigure}[t]{0.32\textwidth}
        \centering
        \includegraphics[width=\textwidth]{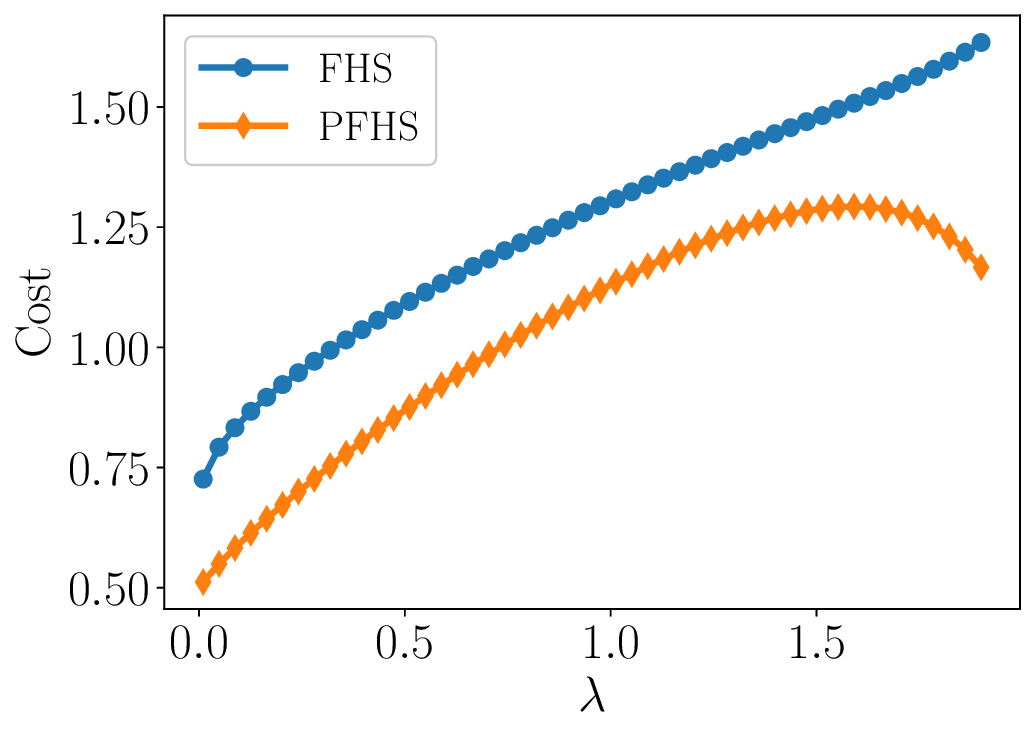}
        \caption{}
    \end{subfigure}
    \caption{Performance of FHS and PFHS in the case of noisy measurements, where cost is defined by \eqref{eq:totCost}. (a) Cost as a function of number of samples. (b) Cost as a function of noise level. (c) Cost as a function of tuning parameter $\lambda$. The performance improvement is most significant when more measurements are taken and for large $\lambda$.}
    \label{fig:fhsVsPfhs}
\end{figure*}

We first compare the performance of FHS and PFHS in the case of binary measurements that are erroneous with constant probability $p$. We report performance over fifteen measurements, considering twenty noise levels between 0.01 and 0.49, and fifty values of $\lambda$ between 0.01 and 1.9. For each configuration, we perform searches over 100
uniformly-spaced values of $\theta$ in the interval $[0,1]$, running 100 Monte Carlo simulations for each value of $\theta$, and report the average cost as defined by \eqref{eq:totCost}. 
In this noisy setting, the interval length for FHS does not reflect the uncertainty in the change point due to the erroneous measurements. By \cite[Thm. 1]{lipor2017distance}, the entropy corresponds to four times the absolute error in the change point. Hence, we use $4 \abs{\hat{\theta}_{N} - \theta}$ when computing \eqref{eq:totCost}.

Fig.~\ref{fig:fhsVsPfhs} shows the cost as a function of sample number (averaged over $p, \lambda$), noise level (averaged over $N, \lambda$), and tuning parameter $\lambda$ (averaged over $p, N$) for each algorithm.
First, we see that the benefits of PFHS are most apparent as more samples are obtained due to the convergent behavior of PFHS. For FHS, the error in estimating $\theta$ can actually increase with more samples, since one erroneous measurement can bias the estimate away from the true value.
Second, while no clear trend in improvement versus noise level is seen, the greatest \textit{percent} improvement is obtained at noise levels below 0.2. This behavior is likely due to the fact that PFHS uses the policy derived from the noiseless case, whose suboptimality is more apparent as the noise level increases. Third, the benefits of PFHS are more apparent as $\lambda$ increases. A closer inspection of entropy and distance reveals that for large $\lambda$, both algorithms travel a
similar distance (making only small movements), but PFHS has a much lower entropy due to its ability to incorporate knowledge of the noise level.
In all cases, PFHS outperforms FHS, with an average cost reduction ranging from 6\% at $p = 0.01$ to 27\% at $p = 0.14$.

\subsubsection{Cost as a Function of Sampling Time}

\begin{figure*}[!t]
    \centering
    \begin{subfigure}[t]{0.32\textwidth}
        \centering
        \includegraphics[width=\textwidth]{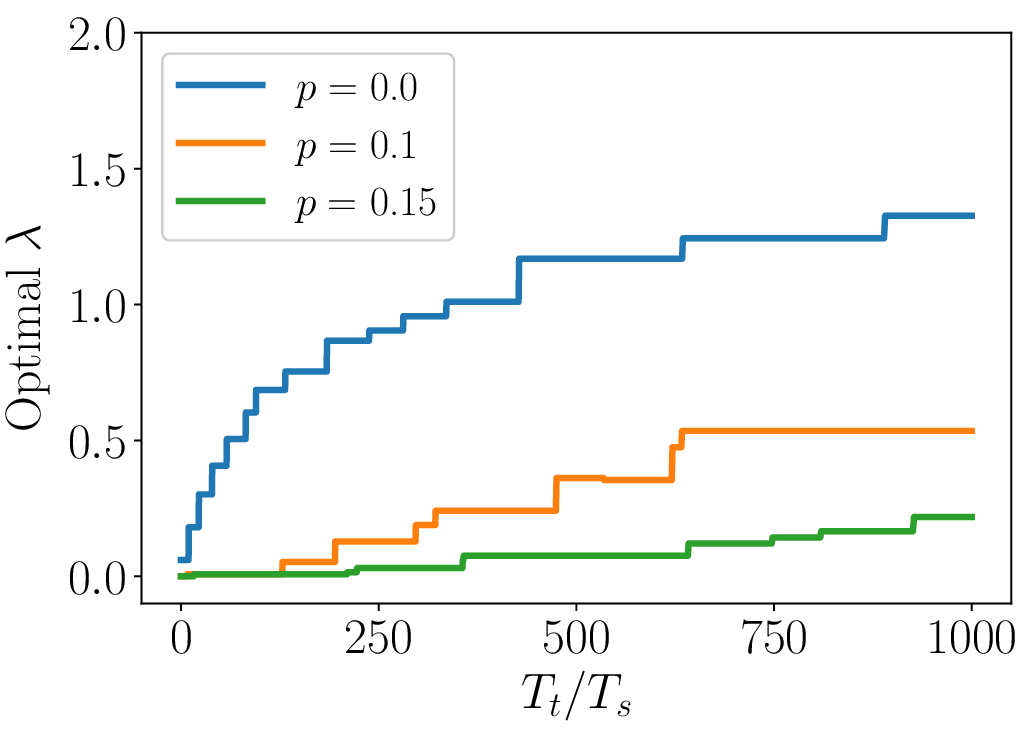}
        \caption{}
    \end{subfigure}
    \hfill
    \begin{subfigure}[t]{0.32\textwidth}
        \centering
        \includegraphics[width=\textwidth]{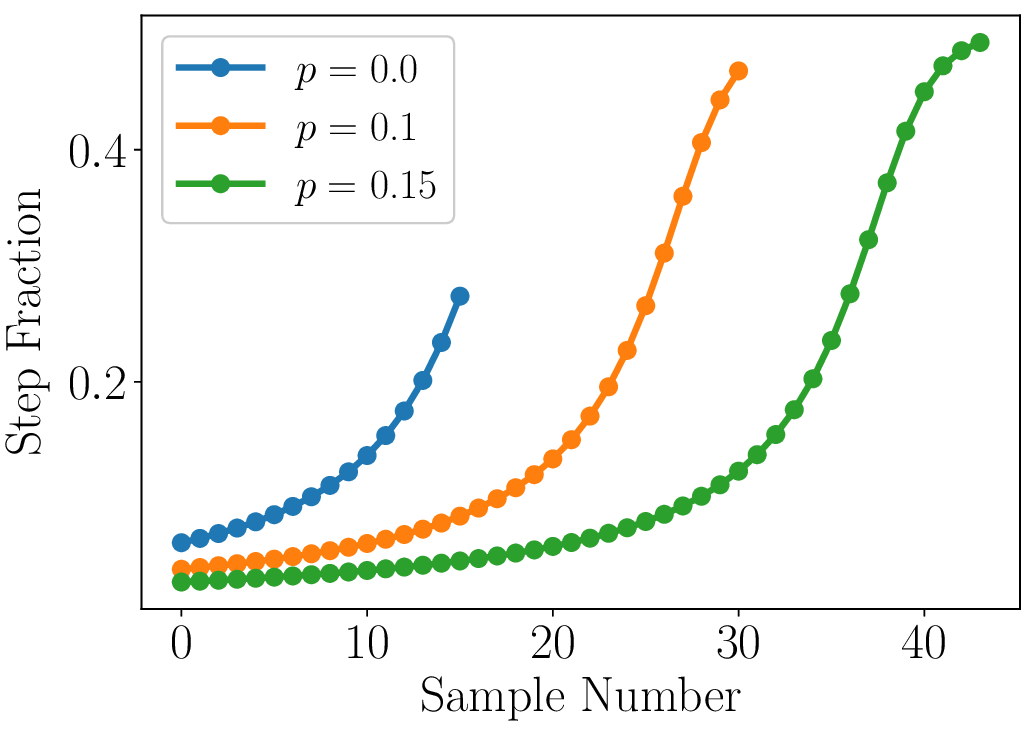}
        \caption{}
    \end{subfigure}
    \hfill
    \begin{subfigure}[t]{0.32\textwidth}
        \centering
        \includegraphics[width=\textwidth]{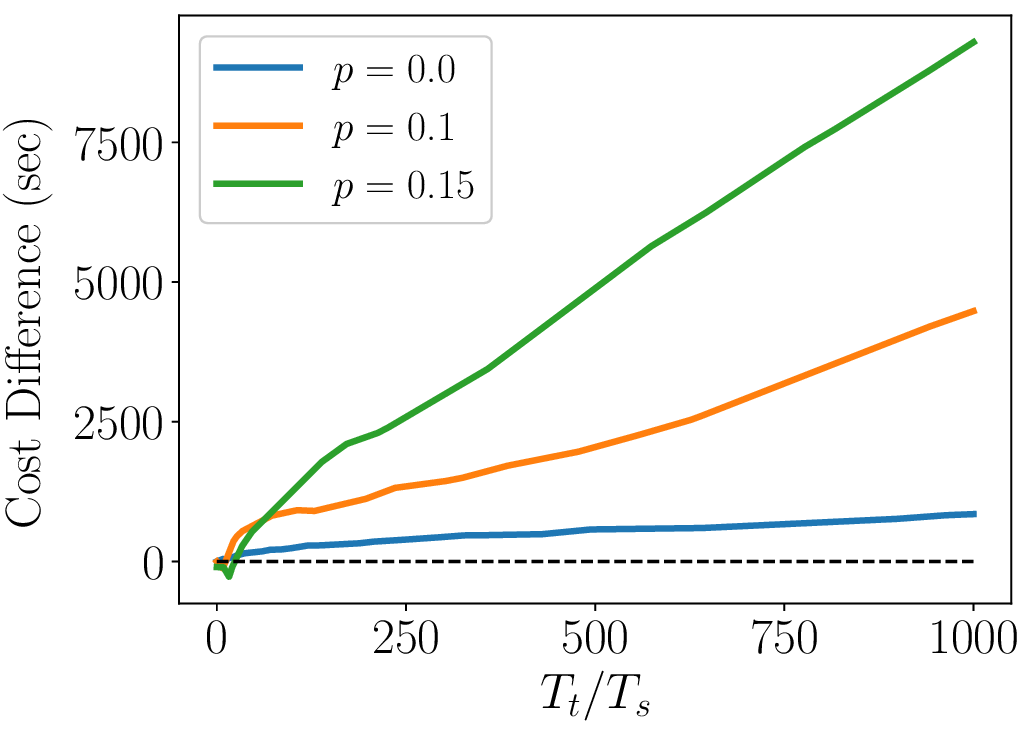}
        \caption{}
    \end{subfigure}
    \caption{Performance of noisy search algorithms for varying noise level $p$ when optimized for total sampling time as defined by \eqref{eq:totCostTime}. (a) Optimal tuning $\lambda$ as a function of the ratio of travel time $T_{t}$ to sampling time $T_{s}$. (b) Policies selected by PFHS for each noise level considered. (c) Cost difference between PQS and proposed PFHS algorithms. As the noise level increases, PFHS favors entropy reduction (smaller $\lambda$) over distance penalization.}
    \label{fig:NCostCompare}
\end{figure*}

To minimize the total time that a vehicle takes to complete a search, we consider a cost function of the form
\begin{equation}
    J_T(z_1, \dots, z_N) = T_s N + T_t D,
    \label{eq:totCostTime}
\end{equation}
where $T_s$ and $T_t$ represent the time per sample and time per unit distance traveled, respectively, and $N$ and $D$ represent the number of samples and total distance. In order to minimize this cost in expectation, we first calculate the number of samples, $N_\lambda$, and total distance, $D_\lambda$, expected for the optimal policy for each value of $\lambda$ to reach a final interval size smaller than desired error $\varepsilon$ using Alg.~\ref{alg:Ent}. We then select the value of $\lambda$ that minimizes the total search time,
\begin{equation}
    \lambda^* = \argmin_\lambda T_s N_\lambda + T_t D_\lambda.
    \label{eq:optLambda}
\end{equation}
For the noisy setting, the expected interval size cannot be computed in closed form. We instead evaluate the sample mean of the interval size, computed over a range of 1,000 values of $\theta \in [0,1]$ and 100 Monte Carlo trials for each value of $\theta$.


We compare the performance of the above method with the existing probabilistic QS (PQS) algorithm for distance-penalized search in one dimension. We consider the same grid of 1,000 values of $\theta$ for 1,000 different ratios of $T_{t}/T_{s}$ in the range of $1 \times 10^{-4}$ to $1 \times 10^{3}$, taking $T_s = 100$ as the base sampling cost.
Fig.~\ref{fig:NCostCompare}(a) shows the value of $\lambda^{*}$ selected by \eqref{eq:optLambda} for noise level $p \in \set{0, 0.1, 0.15}$. As expected, as $T_t$ increases, a higher value of $\lambda^*$ is selected, taking more samples while being less likely to overshoot the change point. Additionally, as the noise level increases, lower values of $\lambda^{*}$ are selected. This results in a search that favors entropy reduction in order to account for the
information loss incurred by noisy measurements.
Fig.~\ref{fig:NCostCompare}(b) shows the PFHS policies selected for each noise level at a ratio $T_{t}/T_{s} = 250$. As expected, more samples are required as the noise level increases, with the lower values of $\lambda$ resulting in a larger maximum step size. The ability of PFHS to utilize small step fractions at early stages allows the algorithm to keep the total distance traveled low while still converging rapidly.
Fig.~\ref{fig:NCostCompare}(c) shows the cost difference between PQS and PFHS for each noise level. In nearly all cases, PFHS outperforms PQS, with a greater difference as both the noise level and the ratio $T_{t}/T_{s}$ increase. Although difficult to see, PQS does outperform PFHS by a small amount for a noise level of $p = 0.15$ and a ratio of $T_{t}/T_{s} \leq 24$. However, PQS obtains a performance improvement of less than 4\%, whereas FHS obtains as much as a 15\% improvement
as the ratio of travel to sample time increases. 


\subsection{Performance on GP-LSE}
\label{sec:gpExperiments}

\begin{figure*}[t]
    \centering
    \begin{subfigure}[t]{0.32\textwidth}
        \centering
        \includegraphics[width=\linewidth]{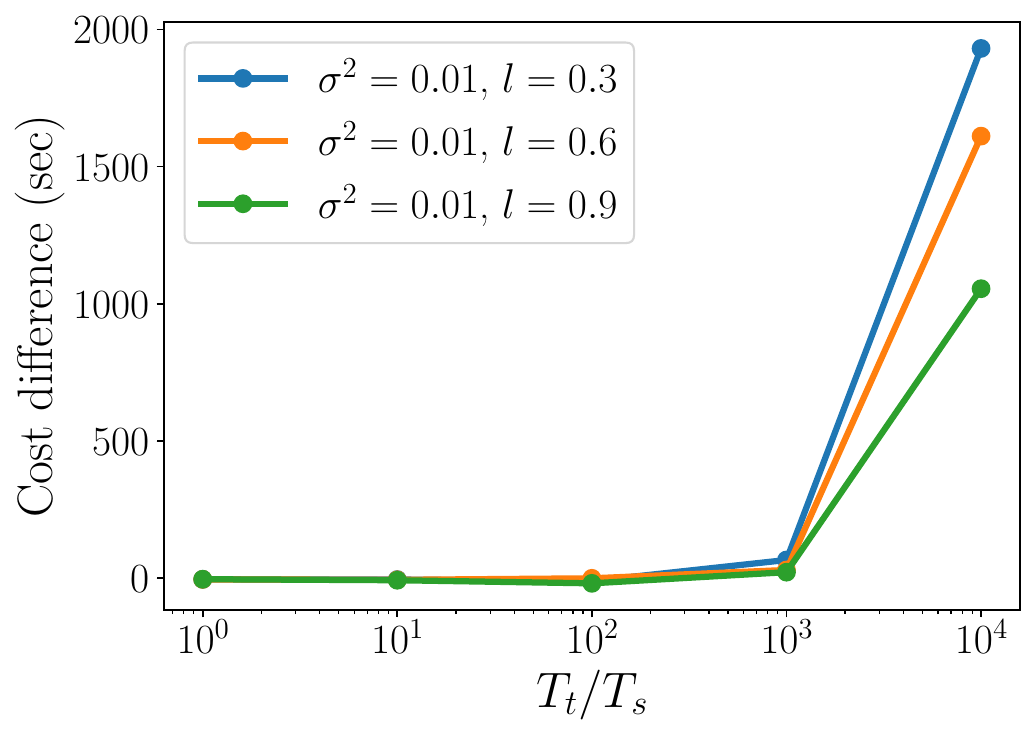}
        \caption{}
    \end{subfigure}
    \hspace{3em}
    \begin{subfigure}[t]{0.32\textwidth}
        \centering
        \includegraphics[width=\linewidth]{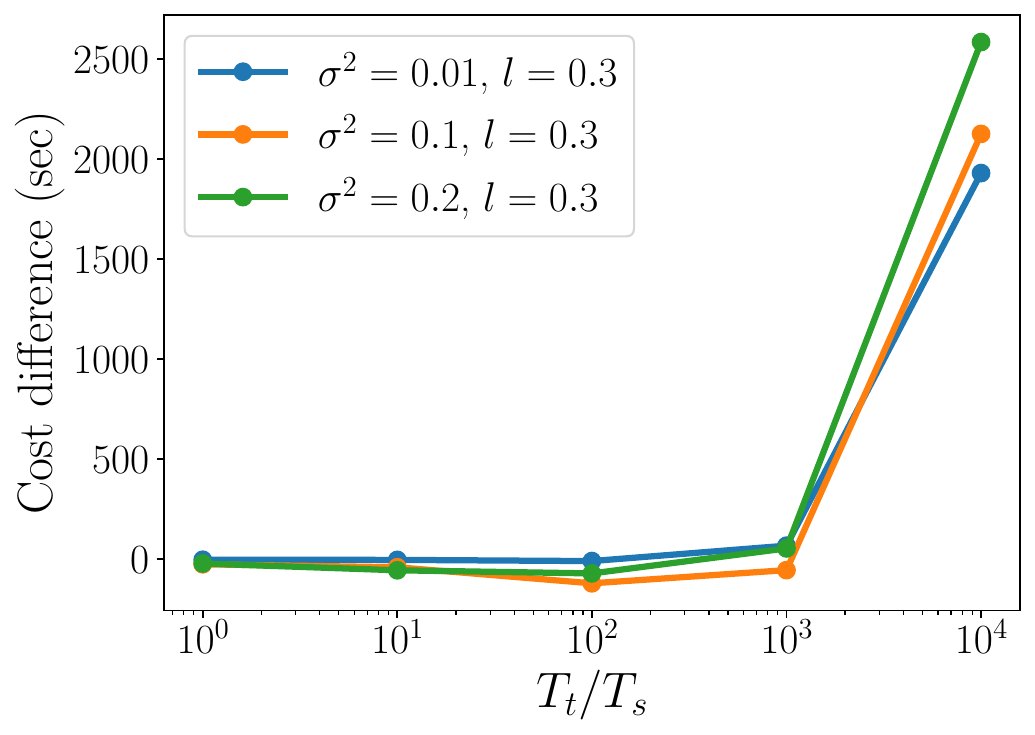}
        \caption{}
    \end{subfigure}
    \caption{Difference in cost between TruVaR and proposed PFHS for GP-LSE on synthetic data. (a) Fixed noise level $\sigma^{2}$ and varying lengthscale $l$. (b) Varying noise level $\sigma^{2}$ and fixed lengthscale $l$. The proposed PFHS obtains the most significant benefit when the ratio of travel time to sample time ($T_{t}/T_{s}$) is large.}
    \label{fig:gpSyntheticResults}
\end{figure*}

Next, we examine the performance of our approach to GP-LSE using the proposed PFHS algorithm. As a benchmark, we compare to the state-of-the-art Truncated Variance Reduction (TruVaR) algorithm \cite{bogunovic2016truncated}, which is designed explicitly for cost-sensitive GP-LSE. To perform LSE, TruVaR maintains estimates of the super-level and sub-level sets, as well as a third set for points whose level set membership is uncertain. Points are placed into the super/sub-level set estimates only after the algorithm is
sufficiently confident they lie above/below the level set threshold, with confidence estimates being obtained from the GP model. TruVaR selects samples based on the ratio of (truncated) variance reduction to cost; as a result, it considers cost myopically. 

We consider two-dimensional GP fields with boundaries satisfying the assumption described in Section~\ref{sec:gpLSE}. 
In all simulations, we provide TruVaR with the true kernel parameters used to generate the two-dimensional GP field under consideration. We set the parameter $\beta_{(i)} = a \log\left( M t_{(i)}^{2} \right)$ as in \cite{bogunovic2016truncated}, where $t_{(i)}$ denotes the time at which the epoch starts, $M$ denotes the number of elements in the two-dimensional field, and $a$ is a constant. We found the best performance resulted from setting $a =
0.0001$. All other parameters were set according to the recommendations in \cite{bogunovic2016truncated}. We measure the error in terms of the symmetric difference between the true and estimated super-level set divided by the total number of points in the field, i.e.,
\begin{equation}
    E = \frac{1}{M} \abs{S \triangle \hat{S}},
    \label{eq:lsError}
\end{equation}
where $M$ is the total number of points in the field and $\hat{S}$ denotes the estimated super-level set. When considering the LSE problem as binary classification, the above may be viewed as the classification error.

\begin{table*}[ht!]
    \centering
    \begin{tabular}{ | c | c || c || c || c || c | }
        \hline
        \multirow{2}{*}{} & Sampling Time (s) & 8 & 8 & 30 & 30 \\
        & Velocity (km/hr) & 32 & 65 & 32 & 65 \\
        \hline
        \multirow{2}{*}{PFHS} & Search Cost (hr) & 9.58 & 4.90 & 10.46 & 5.54 \\
        & Error (\%) & 2.87 & 3.26 & 3.16 & 3.37 \\
        \hline
        \multirow{2}{*}{TruVaR} & Search Cost (hr) & 25.1 & 12.75 & 24.88 & 12.96 \\
        & Error (\%) & 14.62 & 14.56 & 14.30 & 14.86 \\
        \hline
    \end{tabular}
    \caption{Total sampling time (in seconds) and estimation error in air quality data following the Camp Fire in November 2018. Region considered and example sampling pattern of PFHS are depicted in Fig.~\ref{fig:fireMap}. For all sampling times and velocities considered, PFHS achieves a significant reduction in both cost and estimation error.}
    \label{tab:fire}
\end{table*}

\subsubsection{Synthetic GP Data}
\label{sec:syntheticGP}

We first compare algorithm performance on synthetic GP data. To ensure the boundary assumption described in Section~\ref{sec:gpLSE} holds, we begin by generating a one-dimensional GP defining the level set boundary. 
We then generate a two-dimensional GP field by taking the true value to be the positive or negative distance from the boundary, obtaining 500 measurements corrupted by Gaussian noise with variance 0.0001.
Finally, we fit a two-dimensional GP to the obtained measurements. An example boundary and
two-dimensional field are shown in Fig.~\ref{fig:gpExample}. For both the one-dimensional boundary and the two-dimensional field, we use the radial basis function (RBF) kernel. For the boundary, we consider lengthscales of 0.3, 0.6, and 0.9 to simulate varying degrees of smoothness in $\partial S$. The two-dimensional field is always fit using a lengthscale of 0.1. Both PFHS and TruVaR are given the true kernel parameters when performing GP-LSE. In all experiments, we generate a field of
size $21 \times 20$; this size is chosen largely due to the high computation time required by TruVaR. To test our approach to handling noisy
measurements, we consider noise variances $\sigma^{2} \in \set{0.01, 0.1, 0.2}$.

To select the number of transects and stopping
error for PFHS, we generate 100 examples of GP fields and boundaries using the above procedure and perform a grid search over both parameters. We then select the parameters that give the lowest total cost while maintaining an average error below 8\% for noise variances $\sigma^{2} \in \set{0.01, 0.1}$ and an error below 11\% for $\sigma^{2} = 0.2$.\footnote{The latter choice was made to match the accuracy achieved by TruVaR for the given parameters.} This procedure is used to select the best
parameters for each lengthscale and noise level under consideration. Note that an equivalent procedure would be required to select the GP kernel and bandwidth parameters for TruVaR; however, to avoid the large computational cost of tuning these parameters, we provide TruVaR with the true kernel used to generate the GP fields. Finally, we compare both algorithms on 100 separate GP fields for each lengthscale.

Fig.~\ref{fig:gpSyntheticResults} displays the average cost difference between TruVaR and PFHS over the 100 random fields, showing the cost difference as a function of the ratio $T_{t}/T_{s}$ for varying values of (a) lengthscale and (b) noise variance. In both cases, we see that for high ratios of travel-to-sample time, FHS outperforms TruVaR by a significant margin. Fig.~\ref{fig:gpSyntheticResults}(a) shows that this improvement reduces with lengthscale, indicating that PFHS excels when the
boundary is least smooth. Fig.~\ref{fig:gpSyntheticResults}(b) shows the improvement for different noise levels and indicates that PFHS obtains the largest improvement for higher noise levels. 
Although difficult to see from the figures, TruVaR does outperform FHS for $T_{t}/T_{s} \in \set{1, 10, 100}$. However, the mean and maximum improvement are 58 sec and 269 sec, respectively, whereas PFHS achieves a mean/maximum improvement of 1200/3900 sec. Further, PFHS typically obtains an error that is 1-4\% lower than that of TruVaR for these values of $T_{t}/T_{s}$, indicating a more careful tuning of parameters may allow PFHS to obtain better performance. Hence, while FHS is most beneficial when travel time is significant relative to measurement time, it is still competitive even for low values of $T_{t}$. Further, we see that by treating the cost of travel nonmyopically, significant performance benefits can be obtained, even though FHS is not explicitly designed for sampling GPs.

Finally, we comment that one additional drawback of TruVaR is that of computational complexity. To perform sample selection, TruVaR must compute the posterior variance after sampling for every location in the set of uncertain points. As a result, the average computation time for each search in the above experiments was 2.45 sec for TruVaR compared to 0.34 sec for PFHS. While both times are sufficiently small for practical applications, we remark that we chose a small field
size ($21 \times 20$) that would result in limited resolution over large spatial regions. This consideration is especially important when attempting to deploy adaptive sampling algorithms on low-cost mobile sensing devices, which will have limited resources for computation and power.

\subsubsection{Air Quality Data}
\label{sec:airQuality}

Finally, we compare the performance of PFHS, FHS, and TruVaR on real air quality data obtained from the AirNow database \cite{epa2020air}. One potential application of LSE approaches is to provide a high-quality estimate of regions containing high levels of particulate matter. Of particular importance is the problem of rapidly estimating such regions during major wildfire events, such as the 2018 Camp Fire in California \cite{census2020camp} or the more recent series of wildfires
impacting the western U.S. in 2020, which resulted in the worst air quality in the world for major cities such as Portland, OR and San Francisco, CA \cite{oregon2020portland}. 

We consider PM 2.5 data from November 18, 2018, using 124 sensors in the region of Butte County, CA. Since the measurements are spatially sparse, we interpolate the values using two-dimensional GP regression with a summation of RBF and bias kernels optimized and implemented via the GPy package \cite{gpy2014}. We set the threshold at 100 $\mu$g / m$^{3}$, which corresponds to the ``unhealthy for sensitive groups'' level according to the AirNow standards \cite{epa2020air}. We perform LSE
over the region depicted in Fig.~\ref{fig:fireMap}(b), which is approximately 111 km per side.
We consider sampling times of 8 and 30 seconds, corresponding to the extremes of the settling time of the Sensirion SPS30 particulate matter sensor \cite{sensirion2021sps}. This sensor has a precision of $\pm$10 $\mu$g/m$^{3}$; treating errors uniformly throughout this range, we set the noise variance of the GP to that of a uniform distribution with support $[-10,10]$, resulting in
$\sigma^{2} = 20^{2} / 12$. This choice minimizes the KL divergence between the uniform and normal distributions. Finally, we consider travel times of 32 km/hr and 65 km/hr based on the maximum speed of the DJI Matrice 600 UAV.
We provide TruVaR with the two-dimensional kernel used to perform GP regression over the field and set the parameter $a = 6$, as we found that the recommended parameter $a = 1$ resulted in very high estimation errors for the high noise variance considered. 
For PFHS, we model the boundary
using a one-dimensional GP with RBF kernel having lengthscale and variance both set to unity. We search over five transects and set the stop error for each transect to 0.03.

\begin{figure}[t]
    \centering
    \includegraphics[width=1.0\linewidth]{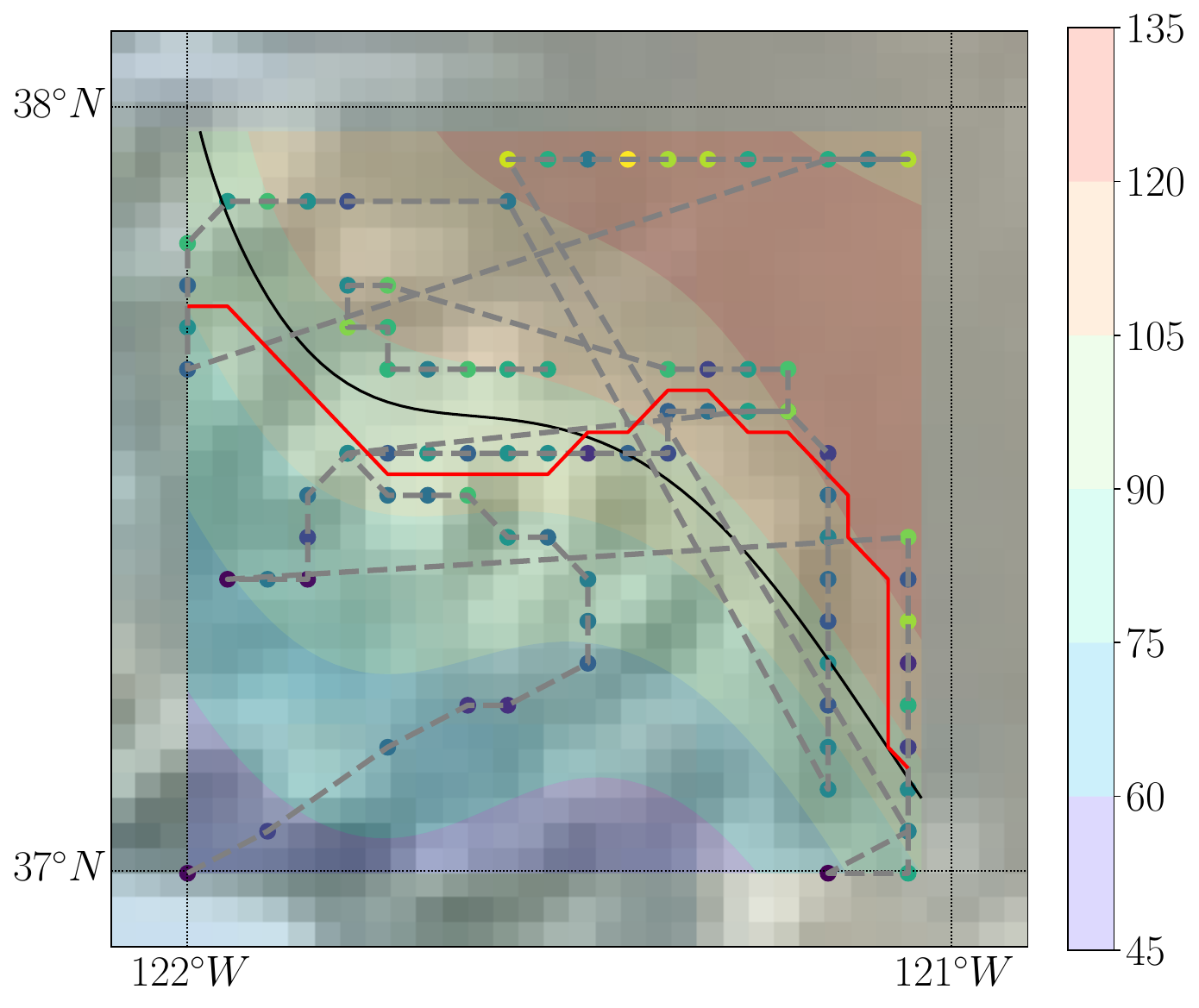}
    \vspace{-1em}
    \caption{Map of sample locations and trajectory followed by TruVaR algorithm on Camp Fire data. Dots denote sample locations, red solid line is estimated boundary, and gray dashed line is path traversed by sensor. Compared to PFHS (see Fig.~\ref{fig:fireMap}(b)), TruVaR does not sufficiently penalize distance in this noisy regime.}
    \label{fig:fireMapTruVaR}
\end{figure}

Table \ref{tab:fire} shows the resulting search cost and error for PFHS and TruVaR in the four scenarios considered. Compared with the results on synthetic data, we see that the performance of TruVaR degrades significantly in this high-noise regime. In all cases, PFHS achieves a lower cost and lower estimation error, typically yielding an estimation error approximately one fifth that of TruVaR at less than half the cost. 
We display the sample locations and path traveled by TruVaR in Fig.~\ref{fig:fireMapTruVaR}. In this high-noise regime, TruVaR places too much emphasis on variance reduction and fails to appropriately penalize for distance traveled. Although not pictured, we also tested the
lower-noise regime and found TruVaR to be more competitive in this setting, focusing samples near the level set boundary and traveling a smaller distance. Further, we note that PFHS relies on the assumption that the superlevel set is a single, connected region with a boundary that can be written as a function of one coordinate. While this assumption is realistic in the case of tracking a wildfire front, it may not be appropriate in other settings (e.g., that considered in \cite{bogunovic2016truncated}), and TruVaR has the added flexibility of discovering superlevel sets consisting of multiple
disjoint regions. Hence, while PFHS provides significant improvements over TruVaR in this experiment, the choice of algorithm is ultimately dependent on the function being sensed and should be informed by expert/domain knowledge.

\section{Conclusions \& Future Work}
\label{sec:conclusion}

We have presented a finite-horizon approach to sensing the change point of a one-dimensional step function that optimally balances the distance traveled and number of samples acquired. We have shown that the resulting policy can be obtained in closed form, making it easily deployable on mobile sensors such as those mounted on a UAV. Aside from outperforming heuristic methods for one-dimensional search, our proposed FHS algorithm outperforms existing methods on the problem of Gaussian
process level set estimation under certain assumptions on the level set boundary.

Our approach to two-dimensional sampling requires localizing the change point over a series of transects. While we optimized both the number of transects and the error per transect numerically, an important open problem is determining the optimal values of these quantities analytically. Another important next step is to incorporate other realistic vehicle costs, such as acceleration and battery life, into the policy calculation.

\appendix[Proofs of Technical Results]

\begin{lemma}
    \label{lem:expLength}
    Let $H_{N}$ be the length of the feasible interval after $N$ measurements. Under the conditions of Theorem~\ref{thm:closedForm}, we have
    \begin{equation}
        \label{eq:expLength}
        \bE \left[ e^{H_{N}} \right] = \prod_{i = 1}^{N} \left( z_{i}^{2} + (1 - z_{i})^{2} \right).
    \end{equation}
\end{lemma}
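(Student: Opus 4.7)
The plan is to prove this by induction on $N$, exploiting the fact that under a uniform prior and noiseless binary observations, the posterior on $\theta$ remains uniform on the current feasible interval at every step. First I would point out this invariance explicitly: since the prior is $\text{Unif}([0,1])$ and each likelihood is an indicator of $\theta$ lying on the correct side of $X_n$, the Bayesian update simply restricts and renormalizes the uniform density, so the posterior is $\text{Unif}([X_l, X_u])$ on the feasible interval. Consequently, at every step the differential entropy equals the logarithm of the feasible interval length, making $e^{H_n}$ equal to that length, and the task reduces to computing $\mathbb{E}[L_N]$ for $L_N$ the random length after $N$ samples.

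The base case $N = 0$ is immediate: $L_0 = 1$ matches the empty product. For the induction step I would condition on the feasible interval $[a_{N-1}, b_{N-1}]$ produced by the first $N-1$ samples, so $L_{N-1} = b_{N-1} - a_{N-1}$ and $\theta \mid L_{N-1} \sim \text{Unif}([a_{N-1}, b_{N-1}])$. The $N$th sample sits at a distance $z_N L_{N-1}$ from one endpoint (which endpoint depends on the policy's forward/backward direction, but as noted below this does not affect the length calculation). By the conditional uniformity of $\theta$, it falls on the side of $X_N$ of fractional length $z_N$ with probability $z_N$ (yielding a new feasible interval of length $z_N L_{N-1}$) and on the other side with probability $1-z_N$ (yielding length $(1-z_N) L_{N-1}$). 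Therefore
\begin{equation*}
\mathbb{E}[L_N \mid L_{N-1}] = z_N \cdot (z_N L_{N-1}) + (1-z_N) \cdot ((1-z_N) L_{N-1}) = (z_N^2 + (1-z_N)^2) L_{N-1}.
\end{equation*}
Taking total expectation and applying the induction hypothesis for $L_{N-1}$ delivers the product formula.

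There is no real obstacle here; the only subtleties worth recording are (i) the preservation of uniformity under the noiseless Bayesian update, which justifies the equivalence between $\mathbb{E}[e^{H_N}]$ and $\mathbb{E}[L_N]$, and (ii) the fact that the direction of travel is immaterial to the expected length because swapping forward and backward motion merely interchanges the roles of $z_N$ and $1-z_N$, and the expression $z_N^2 + (1-z_N)^2$ is invariant under that swap. With those two observations in place, the induction is essentially mechanical.
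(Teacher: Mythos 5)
Your proposal is correct and follows essentially the same route as the paper: an induction on $N$ in which conditioning on the current feasible interval gives $\bE[e^{H_{N}} \mid e^{H_{N-1}}] = (z_{N}^{2} + (1-z_{N})^{2})\, e^{H_{N-1}}$, and the product formula follows by taking total expectation. The extra remarks on preservation of uniformity and invariance under swapping $z_N \leftrightarrow 1-z_N$ are sound but left implicit in the paper's version.
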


\begin{proof} 
    First note that under the uniform distribution on the unit interval, the exponentiated differential entropy is the length of the feasible interval after $N$ samples. The proof will proceed by induction on $N$. Consider the base case, $N = 1$, for which it is trivial to show that
    \begin{align*}
        \bE\left[ e^{H_1} \right] = z_{1}^{2} + (1 - z_{1})^{2} = \xi_1 .
    \end{align*}
            
    Now assume that \eqref{eq:expLength} holds for some $N \in \bN$. Sampling some fraction $z_{N+1}$ into the remaining feasible interval $e^{H_{N}}$ results in two potential entropies
    
    \begin{equation*}
    e^{H_{N+1}} = \begin{cases}
        z_{N+1} e^{H_N}, & \text{w.p.} \quad z_{N+1} \\
        (1-z_{N+1})e^{H_n}, & \text{w.p.} \quad 1-z_{N+1}.
        \end{cases}
    \end{equation*}
    
    
    Therefore
    \begin{eqnarray*}
       \bE [e^{H_{N+1}}] & = & z_{N+1}^2 \bE[e^{H_N}]+(1-z_{N+1})^2\bE[e^{H_N}]\\
       & = & \left(z_{N+1}^2+(1-z_{N+1})^2\right) \bE[e^{H_N}] \\
       & = & \prod_{i = 1}^{N+1} \left( z_{i}^{2} + (1 - z_{i})^{2} \right).
    \end{eqnarray*}
\end{proof}

\begin{proof}[Proof of Lemma~\ref{lem:dp}]
    Define $z_{1:n}=z_1,\ldots,z_n$. Using the hypothesis of Lemma~\ref{lem:dp}, 
For any $z_{1:n}$,  
    \begin{equation}
        J(z_{1:n}) \geq J(z_{1:n-1},z_n^*) \geq J(z_{1:n-2},z_{n-1:n}^*) \geq \ldots \geq J(z_{1:n}^*).
    \end{equation}
\end{proof}

\bibliographystyle{IEEEtran}
\bibliography{IEEEabrv,./bibliography}

\end{document}